\newenvironment{proof*}{\noindent{\bf Proof:}}{}
\newcommand{\ignore}[1]{}
\newcommand{\dd}{\mathrm{d}}
\newcommand{\EE}{\mathrm{E}}
\newcommand{\Real}{\mathbb{R}}
\newcommand{\fhat}{\hat{f}}
\newcommand{\fstar}{f^*}
\newcommand{\calB}{\mathcal{B}}
\newcommand{\calF}{\mathcal{F}}
\newcommand{\calG}{\mathcal{G}}
\newcommand{\calH}{\mathcal{H}}
\newcommand{\calN}{\mathcal{N}}
\newcommand{\calX}{\mathcal{X}}
\newcommand{\scrE}{\mathscr{E}}
\newcommand{\Eqref}[1]{Eq.~{\eqref{#1}}}
\newcommand{\kmin}{\kappa_M}
\newcommand{\hnorm}[1]{\|_{\calH_{#1}}}
\newcommand{\lambdaone}{{\lambda_1^{(n)}}}
\newcommand{\lambdatmp}{{\lambda}}
\newcommand{\Uns}{U_{n,s}}
\newcommand{\LPi}{L_2(\Pi)}
\newcommand{\calHlp}{\calH_{\ell_p}}
\newcommand{\calHl}[1]{\calH_{\ell_{#1}}}
\newcommand{\repH}{\widetilde{\calH}}
\newcommand{\repk}{\widetilde{k}}
\newcommand{\bmid}{~\Big |~}
\newcommand{\ellp}{$\ell_p$}
\newcommand{\kminrho}{\kmin}
\newcommand{\calHtot}{\calH^{\oplus M}}
\newtheorem{Theorem}{Theorem}
\newtheorem{Lemma}[Theorem]{Lemma}
\newtheorem{Proposition}[Theorem]{Proposition}
\newtheorem{Corollary}[Theorem]{Corollary}
\newtheorem{Assumption}{Assumption}
\newcounter{assump}
\renewcommand{\theassump}{\arabic{assump}} 
\newcommand{\Assump}[1][]{{\refstepcounter{assump}{#1} (A\theassump)}}
\title{Fast Learning Rate of $\ell_p$-MKL and its Minimax Optimality}
\author{Taiji Suzuki \\
Department of Mathematical Informatics, \\
The University of Tokyo,\\
7-3-1 Hongo, Bunkyo-ku, Tokyo 113-8656, JAPAN \\
\texttt{\small t-suzuki@mist.i.u-tokyo.ac.jp}
}
\date{January, 2010}
\begin{document}

\maketitle
\begin{abstract}
In this paper, we give a new sharp generalization bound of \ellp-MKL
which is a generalized framework of multiple kernel learning (MKL) and imposes $\ell_p$-mixed-norm regularization instead of $\ell_1$-mixed-norm regularization.
We utilize {\it localization techniques} to obtain the sharp learning rate. 
The bound is characterized by the decay rate of the eigenvalues of the associated kernels. 
A larger decay rate gives a faster convergence rate.
Furthermore, we give the minimax learning rate on the ball characterized by \ellp-mixed-norm in the product space. 
Then we show that our derived learning rate of \ellp-MKL achieves the minimax optimal rate on the \ellp-mixed-norm ball.
\end{abstract}

\section{Introduction}
Multiple Kernel Learning (MKL) proposed by \cite{JMLR:Lanckriet+etal:2004} is one of the most promising methods that adaptively select the kernel function
in supervised kernel learning.
Kernel method is widely used and several studies have supported its usefulness \citep{book:Schoelkopf+Smola:2002,Book:Taylor+Cristianini:2004}. 
However the performance of kernel methods critically relies on the choice of the kernel function.
Many methods have been proposed to deal with the issue of kernel selection. 
\cite{JMLR:Ong+etal:2005} studied hyperkrenels as a kernel of kernel functions. 
\cite{ICML:Argriou+etal:2006} considered DC programming approach to learn a mixture of kernels with continuous parameters (see also \cite{COLT:Andreas+etal:2005}).
Some studies tackled a problem to learn non-linear combination of kernels as in \cite{NIPS:Bach:2009,NIPS:Cortes+etal:nonlinear:2009,ICML:Varma+Babu:2009}. 
Among them, learning a linear combination of finite candidate kernels with non-negative coefficients is the most basic, fundamental and commonly used approach.
The seminal work of MKL by \citet{JMLR:Lanckriet+etal:2004} considered learning convex combination of candidate kernels. 
This work opened up the sequence of the MKL studies.
\cite{ICML:Bach+etal:2004} showed that MKL can be reformulated as a kernel version of the group lasso \citep{JRSS:YuanLin:2006}.
This formulation gives an insight that MKL can be described as a $\ell_1$ regularized learning method.
As a generalization of MKL, \ellp-MKL that imposes $\ell_p$-mixed-norm regularization 
($\sum_{m=1}^M \|f_m\hnorm{m}^p$ with $p\geq 1$) has been proposed \citep{JMLR:MicchelliPontil:2005,NIPS:Marius+etal:2009},
where $\{\calH_m\}_{m=1}^M$ are $M$ reproducing kernel Hilbert spaces (RKHSs) and $f_m \in \calH_m$.
\ellp-MKL includes the original MKL 
as a special case of $\ell_1$-MKL. 
One recent perception is that \ellp-MKL with $p>1$ shows better performances than $\ell_1$-MKL in several situations \citep{NIPS:Marius+etal:2009,UAI:Cortes+etal:2009}.
To justify the usefulness of $\ell_p$-MKL, a few papers have given theoretical analyses of $\ell_p$-MKL \citep{UAI:Cortes+etal:2009,ICML:Cortes+etal:gbound:2010,arXiv:Marius+etal:2010}. 
In this paper, we give a new faster learning rate of $\ell_p$-MKL utilizing the {\it localization techniques} \citep{Book:VanDeGeer:EmpiricalProcess,LocalRademacher,BartlettConvexity,Koltchinskii},
and show our learning rate is optimal in a sense of minimaxity. 
This is the first attempt to show the fast localized learning rate for \ellp-MKL.

In the pioneering paper of \citet{JMLR:Lanckriet+etal:2004}, 
a convergence rate of MKL is given as $\sqrt{\frac{M}{n}}$, where $M$ is the number of given kernels and $n$ is the number of samples. 
\cite{COLT:Srebro+BenDavid:2006} gave improved learning bound utilizing the pseudo-dimension of the given kernel class. 
\cite{COLT:Ying+Campbell:2009} gave a convergence bound utilizing Rademacher chaos
and gave some upper bounds of the Rademacher chaos utilizing the pseudo-dimension of the kernel class.
\cite{UAI:Cortes+etal:2009} presented a convergence bound for a learning method with $L_2$ regularization on the kernel weight.
\cite{ICML:Cortes+etal:gbound:2010} showed that the convergence rate of $\ell_1$-MKL is $\sqrt{\frac{\log(M)}{n}}$. 
They gave also the convergence rate of \ellp-MKL as $\frac{M^{1-\frac{1}{p}}}{\sqrt{n}}$ for $p>1$.
\cite{arXiv:Marius+etal:2010} gave a similar convergence bound with improve constants. 
\cite{ECML:Marius+etal:2010} generalized the bound to a variant of the elasticnet type regularization and 
widened the effective range of $p$ to all range of $p \geq 1$ while in the existing bounds $1\leq p \leq 2$ was imposed.
Our concern about the existing bounds is that all bounds introduced above are ``global'' bounds in a sense that  
the bounds are applicable to all candidates of estimators.
Consequently all convergence rate presented above are of order $1/\sqrt{n}$ with respect to the number $n$ of samples.
However, by utilizing the {\it localization} techniques including   
so-called local Rademacher complexity \citep{LocalRademacher,BartlettConvexity,Koltchinskii} and peeling device \citep{Book:VanDeGeer:EmpiricalProcess},
we can derive a faster learning rate.
Instead of uniformly bounding all candidates of estimators,  
the localized inequality focuses on a particular estimator such as empirical risk minimizer, 
thus can gives a sharp convergence rate.  

Localized bounds of MKL have been given mainly in sparse learning settings such as $\ell_1$-MKL or elasticnet type MKL \citep{NIPSWS:Taylor:2008,NIPSWS:ElastMKL:2009}.
The first localized bound of MKL is derived by \cite{COLT:Koltchinskii:2008} in the setting of $\ell_1$-MKL.
The second one was given by \cite{AS:Meier+Geer+Buhlmann:2009} who gave a near optimal convergence for elasticnet type regularization.
Recently \cite{AS:Koltchinskii+Yuan:2010} considered a variant of $\ell_1$-MKL and showed it achieves the minimax optimal convergence rate. 
All the localized convergence rates were considered in sparse learning settings. 
The localized fast learning rate of \ellp-MKL has not been addressed.

In this paper, we give a sharp convergence rate of \ellp-MKL utilizing the localization techniques.
Our bound also clarifies the relation between the convergence rate and the tuning parameter $p$.
The resultant convergence rate is $M^{1-\frac{2s}{p(1+s)}}n^{-\frac{1}{1+s}} R_p^{\frac{2s}{p(1+s)}}$ where $R_p= (\sum_{m=1}^M \|\fstar_m\hnorm{m}^p)^{\frac{1}{p}}$ determined 
by the true function $\fstar$ 
and $s$ ($0<s<1$) is a constant that represents the complexity of RKHSs and satisfies $0<s<1$.
The bound includes the bound of \cite{ICML:Cortes+etal:gbound:2010,arXiv:Marius+etal:2010} as a special case of $s\to 1$.
Finally, we show that the bound for \ellp-MKL achieves the minimax optimal rate in the ball with respect to \ellp-mixed-norm $\{f=\sum_{m=1}^M f_m \mid (\sum_{m=1}^M \|f_m\hnorm{m}^p)^{\frac{1}{p}} \leq R \}$. 
This indicates that \ellp-MKL is compatible with \ellp-mixed-norm. 

\section{Preliminary}
In this section we give the problem formulation, the notations and the assumptions for the convergence analysis of \ellp-MKL. 

\subsection{Problem Formulation}
Suppose that we are given $n$ i.i.d. samples $\{(x_i,y_i)\}_{i=1}^n$ distributed from a probability distribution $P$ on $\calX \times \Real$ that has the marginal distribution $\Pi$ on $\calX$. 
We are given $M$ reproducing kernel Hilbert spaces (RKHS) $\{\calH_m\}_{m=1}^M$ each of which is associated with a kernel $k_m$. 
\ellp-MKL ($p\geq 1$) fits a function $f = \sum_{m=1}^M f_m~(f_m\in \calH_m)$ to the data by solving the following optimization problem\footnote{
One might like to use $\sum_{m=1}^M \|f_m\hnorm{m}^p$ instead of $\left(  \sum_{m=1}^M \|f_m\hnorm{m}^p \right)^{\frac{2}{p}}$ as regularization.
However this difference does not matter because by adjusting the regularization parameter $\lambdaone$ there is a one-to-one correspondence between 
the solutions of both regularization types.}: 
\begin{align}
\fhat = \sum_{m=1}^M \fhat_m = &\mathop{\arg \min}_{f_m \in \calH_m~(m=1,\dots,M)}
\frac{1}{n}\sum_{i=1}^N \left(y_i- \sum_{m=1}^M f_m(x_i) \right)^2 +
\lambdaone \left(  \sum_{m=1}^M \|f_m\hnorm{m}^p \right)^{\frac{2}{p}}.
\label{eq:primalLp}
\end{align}
This is reduced to a finite dimensional optimization problem due to the representer theorem \citep{JMAA:KimeldorfWahba:1971}.
The problem is convex and thus there are efficient algorithms to solve that, e.g., \citet{NIPS:Marius+etal:2009,arXiv:Marius+etal:2010} and \citet{NIPS:Vishwanathan+etal:2010}.
In this paper, we focus on the regression problem (the squared loss).
However the discussion presented here can be generalized to Lipschitz continuous and strongly convex losses \citep{BartlettConvexity}. 

Sometimes the regularization of \ellp-MKL for $1\leq p \leq 2$ is imposed in terms of the kernel weight as 
\begin{align}
\mathop{\min}_{\theta\in \Real^M,~f \in \calH_{k_{\theta}}}~&
\frac{1}{n}\sum_{i=1}^N \left(y_i- f(x_i) \right)^2 +
\lambdaone  \|f\|_{\calH_{k_{\theta}}}^2 
~~~~\mathrm{s.t.}&  k_{\theta} = \sum_{m=1}^M \theta_m k_m,~\sum_{m=1}^M \theta_m^{\frac{p}{2-p}} = 1,~\theta_m \geq 0,
\label{eq:constraintLp}
\end{align}
where $\calH_{k_{\theta}}$ is the RKHS corresponding to the kernel $k_{\theta}$. 
However these two formulations are completely same, that is, we obtain the same resultant solution in both formulations 
(see Lemma 25 of \cite{JMLR:MicchelliPontil:2005} and \cite{NIPSWS:RegStrMKL:2010} for details).
Moreover our formulation \eqref{eq:primalLp} also covers the situation of $p>2$ while the kernel weight constraint formulation is restricted to $1\leq p \leq 2$.



\subsection{Notations and Assumptions}

Here, we prepare notations and conditions that are used in the analysis.  

Let $\calHtot = \calH_1 \oplus \dots \oplus \calH_M$. 
Throughout the paper, we assume the following technical conditions 
(see also \citep{JMLR:BachConsistency:2008}). 
\begin{Assumption}{\bf(Basic Assumptions)}\ 
\begin{enumerate}
\item[{\rm \Assump{\label{ass:truenoise}}}]
There exists $\fstar = (\fstar_1,\dots,\fstar_M) \in \calHtot$
such that $\EE[Y|X] = \fstar(X) = \sum_{m=1}^M \fstar_m(X)$,
and the noise $\epsilon := Y - \fstar(X)$ 
is bounded as $|\epsilon| \leq L$.
\item[{\rm \Assump{\label{ass:kernelbound}}}]
For each $m=1,\dots,M$, $\calH_m$ is separable (with respect to the RKHS norm) and $\sup_{X\in \calX} |k_m(X,X)| < 1$.
\end{enumerate}
\end{Assumption}
The first assumption in (A1) ensures the model $\calHtot$ is correctly specified, 
and the technical assumption $|\epsilon| \leq L$ allows $\epsilon f$ to be Lipschitz continuous with respect to $f$.
The noise boundedness can be relaxed to unbounded situation as in~\citep{arXiv:Raskutti+Martin:2010}, but
we don't pursue that direction for simplicity.
  

Due to Mercer's theorem, 
there are an orthonormal system $\{\phi_{k,m}\}_{k,m}$ in $L_2(\Pi)$
and the spectrum $\{\mu_{k,m}\}_{k,m}$
such that $k_m$ has the following spectral representation: 
\begin{equation}
k_m(x,x') = \sum_{k=1}^{\infty} \mu_{k,m} \phi_{k,m}(x) \phi_{k,m}(x'). 
\label{eq:spectralRepre}
\end{equation}
By this spectral representation, the inner-product of RKHS can be expressed as 
$
\langle f_m ,g_m \rangle_{\calH_m} = \sum_{k=1}^{\infty} \mu_{k,m}^{-1} \langle f_m, \phi_{k,m} \rangle_{\LPi} \langle \phi_{k,m}, g_m \rangle_{\LPi},
$
for $f_m, g_m \in \calH_m$.

\begin{table}[t]
\centering
\caption{Summary of the constants we use in this article.}
\label{tab:constants}
\begin{tabular}{|c|l|}
\hline
$n$ & The number of samples.  \\ \hline
$M$ & The number of candidate kernels.  \\ \hline
$s$ & The spectral decay coefficient; see (A3). \\ \hline 
$\kmin$ & The smallest eigenvalue of the design matrix (see \Eqref{eq:defkmin}). \\ \hline
$R_p$ & The $\ell_p$-mixed-norm of the truth: $(\sum_{m=1}^M \|\fstar_m \hnorm{m}^p)^{\frac{1}{p}}$	. \\ \hline
\end{tabular}
\end{table}


Constants we use later are summarized in Table~\ref{tab:constants}.
\begin{Assumption}{\bf (Spectral Assumption)}
\label{eq:specass}
There exist $0 < s < 1$ and $0 < c$ such that 
\begin{flalign*}
\text{\rm \Assump}&& 
\mu_{k,m} \leq c k^{-\frac{1}{s}},~~~(1\leq \forall k, 1\leq \forall m \leq M), && 
\end{flalign*}
where $\{\mu_{k,m}\}_{k}$ is the spectrum of the kernel $k_m$ (see Eq.\eqref{eq:spectralRepre}).
\end{Assumption}
It was shown that the spectral assumption (A3) is equivalent to 
the classical covering number assumption~\citep{COLT:Steinwart+etal:2009}.
Recall that 
the $\epsilon$-covering number $N(\epsilon,\mathcal{B}_{\calH_m},\LPi)$ with respect to $\LPi$
is the minimal number of balls with radius $\epsilon$ needed to cover the unit ball $\mathcal{B}_{\calH_m}$ in $\calH_m$ \citep{Book:VanDerVaart:WeakConvergence}.
If the spectral assumption (A3) holds, there exists a constant $C$ that
depends only on $s$ and $c$ such that 
\begin{align}
\label{eq:coveringcondition}
\log N(\varepsilon,\mathcal{B}_{\calH_m},\LPi) \leq C \varepsilon^{-2 s},
\end{align}
and the converse is also true (see \citet[Theorem 15]{COLT:Steinwart+etal:2009} and \citet{Book:Steinwart:2008} for details).
Therefore, 
if $s$ is large, 
the RKHSs are regarded as ``complex'',
and if $s$ is small, the RKHSs are ``simple''.

Associated with the $\epsilon$-covering number, 
the {\it $i$-th entropy number} $e_i(\calH_m \to \LPi) $ is defined as the infimum over all $\varepsilon>0$ for which $N(\varepsilon,\mathcal{B}_{\calH_m},\LPi) \leq 2^{i-1}$.
If the spectral assumption (A3) holds, 
the relation \eqref{eq:coveringcondition} implies that the $i$-th entropy number is bounded as 
\begin{align}
\label{eq:entropycondition}
e_i(\calH_m \to \LPi) \leq C i^{- \frac{1}{2s}},
\end{align}
where $C$ is a constant.
To bound empirical process a bound of the entropy number with respect to the empirical distribution is needed.
The following proposition gives an upper bound of that (see Corollary 7.31 of \cite{Book:Steinwart:2008}, for example).
\begin{Proposition}
\label{prop:upperboundofe}
If there exists constants $0<s<1$ and $C \geq 1$ such that 
$e_i(\calH_m \to \LPi) \leq C i^{- \frac{1}{2s}}$, then  there exists a constant $c_s > 0$ only depending on $s$ such that 
\begin{align*}
\EE_{D_n \sim \Pi^n}[e_i(\calH_m \to L_2(D_n))] \leq c_s C (\min(i,n))^{\frac{1}{2s}} i^{-\frac{1}{s}},
\end{align*}
in particular 
$
\EE_{D_n \sim \Pi^n}[e_i(\calH_m \to L_2(D_n))] \leq c_s C  i^{-\frac{1}{2s}}.
$
\end{Proposition}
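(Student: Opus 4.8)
The plan is to split the index range into a ``body'' $i\leq n$ and a ``tail'' $i>n$. For the body I will show that the \emph{expected} empirical entropy numbers inherit the population rate $e_i(\calH_m\to\LPi)\leq Ci^{-\frac{1}{2s}}$; for the tail I will exploit that $L_2(D_n)$ has dimension at most $n$ and combine a Euclidean volume bound with the multiplicativity of entropy numbers. Throughout, write $S_{D_n}:\calH_m\to L_2(D_n)$ for the inclusion. By the reproducing property and (A2), $|f(x)|\leq\|f\|_{\calH_m}\sqrt{k_m(x,x)}\leq\|f\|_{\calH_m}$, so $\|S_{D_n}\|\leq 1$ uniformly in $D_n$, and $\mathrm{rank}(S_{D_n})\leq n$ because $L_2(D_n)$ is a space of functions on $n$ points.

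\textbf{The body ($i\leq n$).} Here the goal is $\EE_{D_n}e_i(S_{D_n})\leq c_sC\,i^{-\frac{1}{2s}}$, which is consistent with the claimed rate since $(\min(i,n))^{\frac{1}{2s}}i^{-\frac{1}{s}}=i^{-\frac{1}{2s}}$ in this range. I would obtain it by bounding the expected empirical covering number $\EE_{D_n}\log N(\varepsilon,\mathcal{B}_{\calH_m},L_2(D_n))$ in terms of $\log N(c\varepsilon,\mathcal{B}_{\calH_m},\LPi)$, which by \eqref{eq:coveringcondition} is at most a constant times $\varepsilon^{-2s}$. The mechanism: on $\mathcal{B}_{\calH_m}$ the functions $f^2$ are uniformly bounded (again by (A2)), so a symmetrization-and-concentration argument controls the modulus $\sup_{f\in\mathcal{B}_{\calH_m}}\bigl|\,\|f\|_{L_2(D_n)}^2-\|f\|_{\LPi}^2\,\bigr|$ by a Rademacher term whose Dudley integral $\int_0^1\sqrt{\log N(\varepsilon,\mathcal{B}_{\calH_m},\LPi)}\,\dd\varepsilon$ is of order $\int_0^1\varepsilon^{-s}\,\dd\varepsilon$ and converges precisely because $s<1$. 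On the resulting high-probability event, an $L_2(D_n)$-ball of radius $\varepsilon$ is comparable to an $\LPi$-ball of slightly larger radius, so the covering numbers transfer up to a constant; converting covering numbers back to entropy numbers via \eqref{eq:entropycondition} then gives the stated rate.

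\textbf{The tail and assembling the bound.} For $i>n$ I would factor $S_{D_n}=\mathrm{id}_{L_2(D_n)}\circ S_{D_n}$ and apply multiplicativity of entropy numbers together with the standard volume bound $e_j(\mathrm{id}:\ell_2^n\to\ell_2^n)\leq 3\cdot 2^{-\frac{j-1}{n}}$:
\[
  e_i(S_{D_n})\;\leq\;e_{i-n+1}\!\bigl(\mathrm{id}:\ell_2^n\to\ell_2^n\bigr)\,e_n(S_{D_n})\;\leq\;3\cdot 2^{-\frac{i-n}{n}}\,e_n(S_{D_n}).
\]
Taking expectations and inserting the body estimate at $i=n$ yields $\EE_{D_n}e_i(S_{D_n})\leq 3c_sC\,n^{-\frac{1}{2s}}2^{-\frac{i-n}{n}}$, and since $t\mapsto 2^{-(t-1)}t^{\frac{1}{s}}$ is bounded on $t=i/n\geq 1$ this is at most $c_s'C\,n^{\frac{1}{2s}}i^{-\frac{1}{s}}$. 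Combining the two regimes gives $\EE_{D_n}e_i(S_{D_n})\leq c_sC(\min(i,n))^{\frac{1}{2s}}i^{-\frac{1}{s}}$, and since $n/i<1$ for $i>n$ this is in turn $\leq c_sC\,i^{-\frac{1}{2s}}$, which is the ``in particular'' statement.

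\textbf{Main obstacle.} The factorization, the multiplicativity, and the Euclidean volume bound are soft; the substantive step is the body estimate --- that the expected empirical entropy numbers of the RKHS ball do not blow up relative to the population ones. The delicate points there are the uniform control of $\|f\|_{L_2(D_n)}^2-\|f\|_{\LPi}^2$ over $\mathcal{B}_{\calH_m}$, which relies on the $L_\infty$-boundedness from (A2) and on the convergence of Dudley's entropy integral for $s<1$, and the mild circularity that the empirical process modulus is itself phrased through empirical entropy, which one resolves by a fixed-point/iteration step. This is precisely the content of Corollary 7.31 of \cite{Book:Steinwart:2008}, which one may alternatively invoke directly.
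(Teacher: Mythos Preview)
The paper gives no proof of this proposition; it is stated with the pointer ``see Corollary~7.31 of \cite{Book:Steinwart:2008}'' and then used as a black box. Your proposal therefore already goes further than the paper does, and you correctly locate the source at the end.

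Your tail argument ($i>n$) via multiplicativity of entropy numbers together with the volume bound $e_j(\mathrm{id}:\ell_2^n\to\ell_2^n)\leq 3\cdot 2^{-(j-1)/n}$ is exactly the device used in Steinwart's proof of the cited corollary. For the body ($i\leq n$), however, your route differs from the one in the reference. Steinwart's argument is operator-theoretic: it passes, via a Carl-type inequality, from entropy numbers to approximation numbers of the sampling operator $S_{D_n}$, and controls those in expectation through the tail of the population eigenvalues (equivalently, a Hilbert--Schmidt norm of a projected piece of the kernel integral operator). Your modulus approach, as you partly anticipate, has a genuine resolution problem: a uniform additive bound $\sup_{f\in\mathcal{B}_{\calH_m}}\bigl|\|f\|_{L_2(D_n)}^2-\|f\|_{\LPi}^2\bigr|\leq\delta_n$ only converts an $\varepsilon$-cover in $\LPi$ into a $\sqrt{\varepsilon^2+\delta_n}$-cover in $L_2(D_n)$, so it yields nothing once $\varepsilon^2\ll\delta_n$. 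For $i$ close to $n$ you need $\varepsilon\sim n^{-1/(2s)}$, hence $\delta_n\lesssim n^{-1/s}$; but symmetrization plus Dudley gives only $\delta_n\sim n^{-1/2}$, and even the local fixed-point rate is $n^{-1/(1+s)}$, both strictly slower than $n^{-1/s}$ for $s<1$. The operator-theoretic route sidesteps this scale mismatch entirely, which is why it is the argument of record; invoking the corollary directly, as you suggest at the end, is exactly what the paper does.
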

An important class of RKHSs where $s$ is known is Sobolev space.
(A3) holds with $s=\frac{d}{2m}$ for Sobolev space of $m$-times continuously differentiability on the Euclidean ball of $\Real^d$ \citep[Theorem 2.7.1]{Book:VanDerVaart:WeakConvergence}.  
Moreover, for $m$-times differentiable kernels on a closed Euclidean ball in $\Real^d$, 
that holds for $s=\frac{d}{2m}$ \citep[Theorem 6.26]{Book:Steinwart:2008}. 
According to \citet{JC:Zhou:2002}, for Gaussian kernels with compact support, that holds for arbitrary small $0<s$.
The entropy number of Gaussian kernels with {\it unbounded} support is described in Theorem 7.34 of \citet{Book:Steinwart:2008}.

Let $\kmin$ be defined as follows:
\begin{align}
\label{eq:defkmin}
\kmin &:= \sup\left\{\kappa \geq 0 ~\Big |~ 
\kappa \leq  \frac{\|\sum_{m=1}^M f_m\|_{\LPi}^2}{\sum_{m=1}^M \|f_m\|_{\LPi}^2} ,~\forall f_m \in \calH_m~(m=1,\dots,M)\right\}. 
\end{align}
$\kmin$ represents the correlation of RKHSs. 
We assume all RKHSs are not completely correlated to each other.
\begin{Assumption}{\bf (Incoherence Assumption)}
$\kappa_M$ is strictly bounded from below; there exists a constant $C_0>0$ such that 
\begin{flalign*}
\text{\rm \Assump} 
 && 0 <C_0^{-1} < \kmin. && 
\end{flalign*}
\end{Assumption}
This condition is motivated by the {\it incoherence condition} \citep{COLT:Koltchinskii:2008,AS:Meier+Geer+Buhlmann:2009}
considered in sparse MKL settings. This ensures the uniqueness of the decomposition $\fstar = \sum_{m=1}^M \fstar_m$ of the ground truth.
\cite{JMLR:BachConsistency:2008} also assumed this condition to show the consistency of $\ell_1$-MKL.

Finally we give a technical assumption with respect to $\infty$-norm.
\begin{Assumption}{\bf (Embedded Assumption)}
\label{ass:linfbound}
Under the Spectral Assumption ($s$), there exists a constant $C_1>0$ such that 
\begin{flalign*}
\text{\rm \Assump }
&& \|f_m\|_{\infty} \leq C_1 \|f_m\hnorm{m}^{1-s} \|f_m\|_{\LPi}^s. &&
\end{flalign*}
\end{Assumption}
This condition is met when the RKHSs are continuously embedded in a Besov space $B_{2,1}^{sm}(X)$ where $s=\frac{d}{2m}$, $d$ is the dimension of the input space $\calX$ and 
$m$ is the smoothness of the Besov space.
For example, the RKHSs of Gaussian kernels can be embedded in all Sobolev spaces,
and therefore the condition (A5) seems rather common and practical.
More generally, there is a clear characterization of the condition (A5) in terms of {\it real interpolation of spaces}. 
One can find 
detailed and formal discussions of interpolations in \citet{COLT:Steinwart+etal:2009}, and 
Proposition 2.10 of \citet{Book:Bennett+Sharpley:88} gives the necessary and sufficient condition for the condition (A5).

\section{Convergence Rate of $\ell_p$-MKL}
Here we derive the convergence rate of the estimator $\fhat$. 
We suppose that the number of kernels $M$ can increase along with the number of samples $n$.
The motivation of our analysis is summarized as follows:
\begin{itemize}
\item Deriving a sharp convergence rate utilizing localization techniques.
\item Clarifying the relation between the norm $(\sum_{m=1}^M \|\fstar_m\hnorm{m}^p)^{\frac{1}{p}} $ of the truth and the generalization bound.
\end{itemize}

Now we define 
\[
\eta(t) := \eta_{n}(t) = \max(1,\sqrt{t},t/\sqrt{n}),
\]
and for a given positive real $\lambda$ we define 
\begin{align}
\zeta_n := 2 \left(\sqrt{\frac{M\log(M)}{n}}\vee \frac{\lambdatmp^{-\frac{s}{2}} M^{\frac{1+s}{2} -\frac{s}{p}}}{\sqrt{n}} \vee  
\frac{M^{\frac{1+4s-s^2}{2(1+s)} - \frac{s(3-s)}{p(1+s)}}\lambdatmp^{-\frac{s(3-s)}{2(1+s)}}}{n^{\frac{1}{1+s}}} \right).
\label{eq:defzetan}
\end{align}
Then we obtain the following convergence rate.
\begin{Theorem}
\label{th:convergencerateofLpMKL}
Suppose $\lambdaone > 0$ and let $\lambdatmp$ in the definition \eqref{eq:defzetan} of $\zeta_n$ 
be $\lambdatmp = \lambdaone$.
Then there exists a constant $\psi_s$ depending $L,s,c,C_1$ such that 
for all $n$ and $t'(>0)$ that satisfy  $\frac{\log(M)}{\sqrt{n}}\leq 1$ and 
\begin{align*}
\frac{\psi_s \sqrt{n} \zeta_n^2}{\kmin} \eta(t') \leq 1,
\end{align*}
the solution of \ellp-MKL given in \Eqref{eq:primalLp} for arbitrary real $p\geq 1$ satisfies  
\begin{align}
& \|\fhat - \fstar\|_{\LPi}^2 
\leq 
 \frac{\psi_s^2}{ \kminrho}\zeta_n^2 \eta(t)^2 
+ 
\frac{8}{3} \lambdaone \left( \sum_{m=1}^M \|\fstar_m \hnorm{m}^p \right)^{\frac{2}{p}},
\label{eq:theMainBound}
\end{align}
with probability $1- \exp(- t) - \exp(-t')$ for all $t \geq 1$.
\end{Theorem}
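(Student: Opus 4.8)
The plan is to run the standard localization machinery---a basic inequality from optimality of $\fhat$, a two-parameter peeling device, a local Rademacher complexity bound via empirical entropy numbers, and a concentration step from the empirical to the population norm---while carefully bookkeeping the dependence on $M$, $p$ and $\lambdaone$ so that the three-way maximum in \eqref{eq:defzetan} emerges. First I would use that $\fhat$ minimizes the objective in \eqref{eq:primalLp}: plugging $\fstar$ into it, writing $y_i = \fstar(x_i)+\epsilon_i$ with $|\epsilon_i|\le L$ by (A1), and expanding the squares yields the basic inequality
\[
\|\fhat-\fstar\|_n^2 \le \frac{2}{n}\sum_{i=1}^n \epsilon_i\big(\fhat(x_i)-\fstar(x_i)\big) + \lambdaone\Big[\big(\sum_m\|\fstar_m\hnorm{m}^p\big)^{2/p} - \big(\sum_m\|\fhat_m\hnorm{m}^p\big)^{2/p}\Big],
\]
where $\|\cdot\|_n$ is the empirical $L_2$-norm. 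I keep the bracketed term rather than discarding it: its positive part will contribute the summand $\frac{8}{3}\lambdaone(\sum_m\|\fstar_m\hnorm{m}^p)^{2/p}$ of \eqref{eq:theMainBound}, while the negative part is used to bound the mixed norm of $\fhat$ in the peeling below.

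The heart of the argument is controlling the empirical process $g\mapsto\frac1n\sum_i\epsilon_i g(x_i)$ evaluated at the random index $g=\fhat-\fstar$. Such a $g$ lies in $\calF_{r,R}:=\{f-\fstar:\ \|f-\fstar\|_{\LPi}\le r,\ (\sum_m\|f_m\hnorm{m}^p)^{1/p}\le R\}$, but both $r=\|\fhat-\fstar\|_{\LPi}$ and the realized $R$ are unknown, so I peel simultaneously over dyadic ranges of the $\LPi$-radius and of the mixed norm; on each shell, symmetrization together with a Talagrand-type deviation inequality (using $|\epsilon_i|\le L$) reduces the problem to the Rademacher complexity of $\calF_{r,R}$. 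That complexity is estimated by Dudley's entropy integral combined with the empirical entropy-number bound $\EE[e_i(\calH_m\to L_2(D_n))]\le c_s C i^{-1/(2s)}$ from Proposition~\ref{prop:upperboundofe}; aggregating the $M$ coordinates through the H\"older pairing $\ell_p/\ell_{p'}$ and optimizing how the budget $R^p=\sum_m\|f_m\hnorm{m}^p$ is split across kernels produces the factor $M^{1-\frac{2s}{p(1+s)}}$, while the entropy integral gives the $n^{-1/(1+s)}$ rate; a crude union bound over the $M$ coordinates at the coarsest scale accounts for the $\sqrt{M\log M/n}$ term and the cross-scale contributions for the middle term of $\zeta_n$.

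To convert $\|\fhat-\fstar\|_n^2$ into $\|\fhat-\fstar\|_{\LPi}^2$ I would apply a ratio/concentration inequality over $\calF_{r,R}$; the Bernstein-type variance condition it needs follows from the embedded assumption (A5), $\|f_m\|_\infty\le C_1\|f_m\hnorm{m}^{1-s}\|f_m\|_{\LPi}^s$, which bounds the sup-norm of the shell elements by a power of $r$ and $R$. The incoherence assumption (A4), $\kmin\ge C_0^{-1}>0$, enters to move between $\sum_m\|f_m\|_{\LPi}^2$ and $\|\sum_m f_m\|_{\LPi}^2$, which is what lets the per-coordinate entropy bounds be used for the aggregate function and produces the $1/\kmin$ prefactor; the hypothesis $\psi_s\sqrt{n}\zeta_n^2\eta(t')/\kmin\le1$ is precisely what makes these peeling and concentration estimates hold with the stated probability.

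Assembling the pieces, on an event of probability $1-e^{-t}-e^{-t'}$ one gets a quadratic inequality of the form $\|\fhat-\fstar\|_{\LPi}^2\le c\,\kmin^{-1/2}\zeta_n\eta(t)\,\|\fhat-\fstar\|_{\LPi}+c\,\kmin^{-1}\zeta_n^2\eta(t)^2+\frac{8}{3}\lambdaone(\sum_m\|\fstar_m\hnorm{m}^p)^{2/p}$, and solving it for $\|\fhat-\fstar\|_{\LPi}$ gives \eqref{eq:theMainBound}. I expect the empirical-process step to be the main obstacle: pinning down the exact exponents of $M$ and $\lambdaone$ inside $\zeta_n$ requires a genuinely two-dimensional peeling and a delicate optimization of how the mixed-norm budget is allocated among the $M$ RKHSs inside the entropy integral, and it is the interaction between the local-Rademacher fixed point (which forces $r^2$ to scale like a power of $\lambdatmp$) and the $\ell_p$-H\"older aggregation that makes the three separate regimes in \eqref{eq:defzetan} appear; funnelling all the constants into a single $\psi_s$ is the most error-prone bookkeeping.
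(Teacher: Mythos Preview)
Your architecture---basic inequality, local complexity control, empirical-to-population concentration, then absorb terms---matches the paper, but the central empirical-process step is organized quite differently. You propose a two-parameter peeling on the aggregate class $\calF_{r,R}$ and an entropy/Dudley computation for the $\ell_p$-mixed-norm ball, with the $M$-exponents emerging from ``allocating the $\ell_p$-budget across kernels inside the entropy integral''. The paper never computes or bounds the entropy of $\calHlp(R)$; it works strictly per coordinate. For each fixed $m$ it proves a ratio-type inequality $\bigl|\tfrac1n\sum_i\sigma_i f_m(x_i)\bigr|\le C_s\,\Uns(f_m)$ uniformly over $\calH_m$ (Lemma~\ref{lemm:uniformratiobound}), where the peeling is one-dimensional (over $\|f_m\|_{\LPi}$ only, on the sphere $\|f_m\hnorm{m}=1$) and uses the single-kernel local bound of Proposition~\ref{prop:localFmsBound}. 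A Talagrand-plus-union step over $m$ (Lemma~\ref{lemm:basicuniformlemm}) gives the $\max_m$ control at a $\log M$ price, and contraction passes to the noise $\epsilon_i$. The $\ell_p$ structure then enters only through the elementary H\"older step $\sum_m\|f_m\hnorm{m}/M^{1-1/p}\le(\sum_m\|f_m\hnorm{m}^p)^{1/p}$ (and Cauchy--Schwarz for the $\LPi$ part) when summing $\sum_m\Uns(f_m)$ to obtain \eqref{eq:Unbounds}; the three regimes in $\zeta_n$ come directly from the two branches of Proposition~\ref{prop:localFmsBound} plus the $\sqrt{M\log M/n}$ union cost, not from any budget-allocation inside an entropy integral.

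Your route could in principle be made to work, but it is strictly harder: bounding the covering number of $\{f:\|f\|_{\LPi}\le r\}\cap\calHlp(R)$ is nontrivial (the paper even flags estimating the covering number of $\calHlp(R)$ as an open direction in the Conclusion), and what you would recover is at best the same $\zeta_n$. Two smaller points: the factor $M^{1-\frac{2s}{p(1+s)}}$ you aim for is not in Theorem~\ref{th:convergencerateofLpMKL} itself but appears only in the downstream display~\eqref{eq:convergenceRateSimple} after optimizing $\lambdaone$; and the final assembly in the paper is not a quadratic inequality but an absorption argument---the hypothesis $\psi_s\sqrt n\,\zeta_n^2\eta(t')/\kmin\le 1$ (equivalently $\le\tfrac18$ for $\phi_s$) makes the concentration term $\le\tfrac18\|\fhat-\fstar\|_{\LPi}^2+\ldots$, and a single Young inequality handles the cross term.
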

The proof will be given in Appendix \ref{sec:ProofMainTh}.

Let $R_p := \left( \sum_{m=1}^M \|\fstar_m \hnorm{m}^p \right)^{\frac{1}{p}}$.
Suppose that $n$ is sufficiently large compared with $M$ and $R_p$ 
($n\geq M^{\frac{2}{p}} R_p^{-2} (\log M)^{\frac{1+s}{s}}$ and $n \geq (R_p/M^{\frac{1}{p}})^{\frac{4s}{1-s}}$). 
Then the regularization parameter $\lambdaone$ that achieves the minimum of the RHS of the bound \eqref{eq:theMainBound} is given by 
$$
\lambdaone = n^{-\frac{1}{1+s}} M^{1-\frac{2s}{p(1+s)}} R_p^{-\frac{2}{1+s}},
$$
up to constant.
Then the convergence rate of $\|\fhat - \fstar\|_{\LPi}^2 $ becomes 
\begin{align}
\|\fhat - \fstar\|_{\LPi}^2 = &\mathcal{O}_p\Big( n^{-\frac{1}{1+s}} M^{1-\frac{2s}{p(1+s)}} R_p^{\frac{2s}{1+s}} + \frac{M\log(M)}{n} 
+ n^{-\frac{1}{1+s} - \frac{(s-1)^2}{(1+s)^2}} M^{1- \frac{2s(3-s)}{p(1+s)^2}} 
R_p^{\frac{2s(3-s)}{(1+s)^2}} \Big). 
\label{eq:convergenceRateComplex}
\end{align}
Under the condition $n\geq M^{\frac{2}{p}} R_p^{-2} (\log M)^{\frac{1+s}{s}}\vee (R_p/M^{\frac{1}{p}})^{\frac{4s}{1-s}}$, 
the leading term is the first term,
and thus we have 
\begin{equation}
\label{eq:convergenceRateSimple}
\|\fhat - \fstar\|_{\LPi}^2 = \mathcal{O}_p \left(n^{-\frac{1}{1+s}} M^{1-\frac{2s}{p(1+s)}} R_p^{\frac{2s}{1+s}}\right).
\end{equation}
Note that as the complexity $s$ of RKHSs becomes small the convergence rate becomes fast.  
It is known that $n^{-\frac{1}{1+s}}$ is the minimax optimal learning rate for single kernel learning.
The derived rate of \ellp-MKL is obtained by multiplying a coefficient depending on $M$ and $R_p$ to the optimal rate of single kernel learning.
To investigate the dependency of $R_p$ to the learning rate, let us consider two extreme settings, i.e., 
sparse setting $(\|\fstar_m\hnorm{m})_{m=1}^M = (1,0,\dots,0)$ and dense setting $(\|\fstar_m\hnorm{m})_{m=1}^M = (1,\dots,1)$
as in \cite{arXiv:Marius+etal:2010}.
\begin{itemize}
\item $(\|\fstar_m\hnorm{m})_{m=1}^M = (1,0,\dots,0)$: $R_p=1$ for all $p$. 
Therefore the convergence rate $n^{-\frac{1}{1+s}} M^{1-\frac{2s}{p(1+s)}}$ is fast for small $p$ and the minimum is achieved at $p=1$.
This means that $\ell_1$ regularization is preferred for sparse truth.  
\item $(\|\fstar_m\hnorm{m})_{m=1}^M = (1,\dots,1)$: $R_p = M^{\frac{1}{p}}$, thus the convergence rate is $M n^{-\frac{1}{1+s}}$ for all $p$. 
Interestingly for dense ground truth, there is no dependency of the convergence rate on the parameter $p$. 
That is, the convergence rate is $M$ times the optimal learning rate of single kernels learning ($n^{-\frac{1}{1+s}}$) for all $p$.
This means that for the dense settings, the complexity of solving MKL problem is equivalent to that of solving $M$ single kernel learning problems.
\end{itemize}


\subsection{Comparison with existing bounds}
Here we compare the bound we derived with the existing bounds. 
Let $\calHlp(R)$ be the $\ell_p$-mixed norm ball with radius $R$ defined as follows:
$$
\calHlp(R)  := \left\{f = \sum_{m=1}^M f_m \bmid \left( \sum_{m=1}^M \|f_m\hnorm{m}^p \right)^{\frac{1}{p}} \leq R \right\}.
$$
The bounds by \cite{ICML:Cortes+etal:gbound:2010} and \cite{ECML:Marius+etal:2010,arXiv:Marius+etal:2010} are most relevant to our results.
Roughly speaking, their bounds are given as
\begin{align}
R(f) \leq \widehat{R}(f) + C \frac{M^{1-\frac{1}{p}}\vee \sqrt{\log(M)}}{\sqrt{n}}R~~~\text{for all $f \in \calH_{\ell_p}(R)$},  
\label{eq:CortesBound}
\end{align}
where $R(f)$ and $\widehat{R}(f)$ is the population risk and the empirical risk.
First observation is that the bounds by \cite{ICML:Cortes+etal:gbound:2010} and \cite{arXiv:Marius+etal:2010} are restricted to the situation $1\leq p \leq 2$ 
because their analysis is based on the kernel weight constraint formulation \eqref{eq:constraintLp}.
On the other hand, our analysis ans that of \cite{ECML:Marius+etal:2010} covers all $p \geq 1$.
Second, since our bound is specialized to the regularized risk minimizer $\hat{f}$ defined at \Eqref{eq:primalLp}
while the existing bound \eqref{eq:CortesBound} is applicable to all $f \in \calH_{\ell_p}(R)$,
our bound is sharper than theirs. 
To see this, suppose that  $1\leq p \leq 2$ and $n^{-\frac{1}{2}} M^{1-\frac{1}{p}} \leq 1$ (which means the bound \eqref{eq:CortesBound} makes sense), 
then we have $n^{-\frac{1}{1+s}} M^{1-\frac{2s}{p(1+s)}} \leq n^{-\frac{1}{2}} M^{1-\frac{1}{p}}$.  
For the situation of $p \geq 2$, we have also $n^{-\frac{1}{1+s}} M^{1-\frac{2s}{p(1+s)}} \leq n^{-\frac{1}{2}} M^{1-\frac{1}{p}}$ for $n\geq M^{\frac{2}{p}}$. 
Moreover we should note that $s$ can be large as long as Spectral Assumption (A3) is satisfied. 
Thus the bound \eqref{eq:CortesBound} is recovered by our analysis by approaching $s$ to 1.

The results by \cite{COLT:Koltchinskii:2008,AS:Meier+Geer+Buhlmann:2009,AS:Koltchinskii+Yuan:2010} are also related to ours 
in terms of the proof techniques.
Their analyses and ours utilize the localization techniques to obtain {\it fast localized learning rate}, 
in contrast to the global bound of \cite{ICML:Cortes+etal:gbound:2010,ECML:Marius+etal:2010,arXiv:Marius+etal:2010}. 
However all those localized bounds are considered on a sparse learning settings such as $\ell_1$ and elasticnet regularizations.
Hence their frameworks are rather different from ours.


\section{Lower bound of learning rate}
In this section, we show that the derived learning rate achieves the minimax-learning rate on $\calHlp(R)$.  

We derive the minimax learning rate in a simpler situation.
First we assume that each RKHS is same as others.
That is, the input vector is decomposed into $M$ components like $x=(x^{(1)},\dots,x^{(M)})$ where $\{x^{(m)}\}_{m=1}^M$ are $M$ i.i.d. copies of a random variable $\tilde{X}$,
and $\calH_m = \{f_m \mid f_m(x) = f_m(x^{(1)},\dots,x^{(M)}) = \tilde{f}_m(x^{(m)}),~\tilde{f}_m \in \repH \}$ where $\repH$ is an RKHS shared by all $\calH_m$.
Thus $f\in \calHtot$ is decomposed as  
$f(x) = f(x^{(1)},\dots,x^{(M)}) = \sum_{m=1}^M \tilde{f}_m(x^{(m)})$ where each $\tilde{f}_m$ is a member of the common RKHS $\repH$. 
We denote by $\repk$ the kernel associated with the RKHS $\repH$.

In addition to the condition about the upper bound of spectrum (Spectral Assumption (A3)), 
we assume that the spectrum of all the RKHSs $\calH_m$ have the same lower bound of polynomial rate. 
\begin{Assumption}{\bf (Strong Spectral Assumption)}
\label{eq:specass2}
There exist $0 < s < 1$ and $0<c,c'$ such that 
\begin{flalign*}
\text{\rm(A6)} &&
c' k^{-\frac{1}{s}} \leq \tilde{\mu}_{k} \leq c k^{-\frac{1}{s}},~~~(1\leq \forall k),&&
\end{flalign*}
where $\{\tilde{\mu}_{k}\}_{k}$ is the spectrum of the kernel $\tilde{k}$. 
In particular, the spectrum of kernels $k_m$ also satisfy $\mu_{k,m} \sim k^{-\frac{1}{s}}~(\forall k,m)$.
\end{Assumption}
As discussed just after Assumption \ref{eq:specass2}, this means that the covering number of $\repH$ satisfies
\begin{align*}
\calN(\varepsilon,\mathcal{B}_{\repH},\LPi) \sim  \varepsilon^{-2 s},
\end{align*}
where $\mathcal{B}_{\repH}$ is the unit ball of $\repH$ (see \citet[Theorem 15]{COLT:Steinwart+etal:2009} and \citet{Book:Steinwart:2008} for details).
Without loss of generality, we may assume that 
\begin{align*}
\EE[f(\tilde{X})] = 0~~~(\forall f \in \repH).
\end{align*}
Since each $f_m$ receives i.i.d. copy of $\tilde{X}$, $\calH_m$s are orthogonal to each other:
\begin{align*}
\EE[f_m(X) f_{m'}(X)] = \EE[\tilde{f}_m(X^{(m)}) \tilde{f}_{m'}(X^{(m')})] =0~~~(\forall f_m \in \calH_m,~\forall f_{m'} \in \calH_{m'},~1\leq \forall m \neq m' \leq M).
\end{align*}
We also assume that the noise $\{\epsilon_i\}_{i=1}^n$ is an i.i.d. normal sequence with standard deviation $\sigma>0$.  

Under the assumptions described above, then we have the following minimax $\LPi$-error.
\begin{Theorem}
\label{th:LowerBounds}
For a given $0<R$, the minimax-learning rate on $\calHlp(R)$ is lower bounded as 
\begin{align*}
\min_{\hat{f}} \max_{f^* \in \calHlp(R)} \EE\left[ \|\hat{f} - f^* \|_{\LPi}^2\right] \geq C n^{-\frac{1}{1+s}} M^{1-\frac{2s}{p(1+s)}} R^{\frac{2s}{1+s}},
\end{align*}
where $\inf$ is taken over all measurable functions of $n$ samples $\{(x_i,y_i)\}_{i=1}^n$.
\end{Theorem}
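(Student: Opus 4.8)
The plan is to prove this minimax lower bound by the standard information-theoretic route: construct a large finite packing set inside $\calHlp(R)$ whose elements induce statistically indistinguishable distributions, and then apply Fano's inequality. First I would exploit the orthogonality of the blocks $\calH_m$ (guaranteed by the i.i.d.\ copy construction, as noted just before the statement), so that for $f = \sum_{m=1}^M \tilde f_m$ one has $\|f\|_{\LPi}^2 = \sum_{m=1}^M \|\tilde f_m\|_{\LPi}^2$. This decouples the problem into $M$ parallel single-RKHS estimation problems coupled only through the budget constraint $\sum_{m=1}^M \|\tilde f_m\hnorm{m}^p \leq R^p$.

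Next I would split the budget equally: set $\rho := R M^{-1/p}$ and force $\|\tilde f_m\|_{\repH} \leq \rho$ for every $m$. Inside the $m$-th block I build a local sign-hypercube supported on the top $D$ eigenfunctions $\phi_1,\dots,\phi_D$ of $\repk$, with coefficients $\pm\gamma$. Then $\|\tilde f_m\|_{\LPi}^2 = \gamma^2 D$ exactly by Mercer orthonormality, while the Strong Spectral Assumption (A6) gives $\|\tilde f_m\|_{\repH}^2 = \gamma^2 \sum_{k\leq D}\tilde{\mu}_k^{-1} \leq (c')^{-1}\gamma^2 \sum_{k\leq D} k^{1/s} \lesssim \gamma^2 D^{1+1/s}$ (here one genuinely uses the \emph{lower} spectral bound $\tilde{\mu}_k \geq c' k^{-1/s}$, which is why (A6) rather than (A3) is needed). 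Choosing $\gamma^2 \asymp \rho^2 D^{-(1+1/s)}$ thus keeps every $\tilde f_m$ in the $\rho$-ball and hence the concatenated function in $\calHlp(R)$. Applying the Varshamov--Gilbert bound to the $MD$-dimensional sign hypercube yields a subset $\mathcal V$ with $\log|\mathcal V| \gtrsim MD$ and pairwise Hamming distances in $[MD/8, MD]$; by block-orthogonality the corresponding $\{f^{(v)}\}_{v\in\mathcal V}\subset \calHlp(R)$ have pairwise squared $\LPi$-distances of order $\gamma^2 MD$, both from above and below.

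Then I would compute the Kullback--Leibler divergence: under i.i.d.\ Gaussian noise $N(0,\sigma^2)$ with the random design $\Pi$, $\mathrm{KL}(P^n_{f^{(v)}}\,\|\,P^n_{f^{(v')}}) = \frac{n}{2\sigma^2}\|f^{(v)} - f^{(v')}\|_{\LPi}^2 \lesssim n\gamma^2 MD/\sigma^2$. Fano's inequality then lower bounds the minimax risk by a constant multiple of $\gamma^2 MD$, provided $n\gamma^2 MD/\sigma^2$ is a small enough fraction of $\log|\mathcal V| \asymp MD$, i.e.\ provided $\gamma^2 \lesssim \sigma^2/n$. I would therefore pick $D$ as large as this permits, namely $D \asymp (\rho^2 n/\sigma^2)^{s/(1+s)}$ (so that $\gamma^2 \asymp \sigma^2/n$), which needs only the mild condition $\rho^2 n \gtrsim 1$, absorbed into the regime where the theorem is of interest. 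Substituting back, the minimax risk is $\gtrsim \gamma^2 MD \asymp M\rho^2 D^{-1/s} \asymp M\rho^{2s/(1+s)} n^{-1/(1+s)}$, and with $\rho = R M^{-1/p}$ this equals $M^{1-\frac{2s}{p(1+s)}} n^{-\frac{1}{1+s}} R^{\frac{2s}{1+s}}$, as claimed. One can also check that spreading the budget over only $m_0 \leq M$ blocks yields merely $m_0^{1-\frac{2s}{p(1+s)}} n^{-\frac{1}{1+s}} R^{\frac{2s}{1+s}}$; since $1-\frac{2s}{p(1+s)}>0$ (because $\frac{2s}{p(1+s)} \leq \frac{2s}{1+s} < 1$ when $s<1$), this is maximized at $m_0 = M$, so the equal split is the right construction.

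I expect the main obstacle to be the bookkeeping that keeps the three free parameters mutually consistent: the coefficient magnitude $\gamma$ must be small enough for the Fano condition yet large enough that $\gamma^2 MD$ is exactly the asserted rate; the per-block dimension $D$ must be a positive integer; and the RKHS-norm constraint must invoke (A6) with the correct direction. Everything else --- block orthogonality, Varshamov--Gilbert, the Gaussian KL identity, and the final Fano step --- is routine.
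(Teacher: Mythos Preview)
Your proposal is correct and reaches the same rate, but the route differs in flavor from the paper's. The paper does not build an explicit eigenfunction hypercube; instead it invokes the Yang--Barron metric-entropy version of Fano (via Lemma~3 of Raskutti--Wainwright): for any $\varepsilon_n,\delta_n$,
\[
\min_{\hat f}\max_{f^*\in\calF}\EE\|\hat f - f^*\|_{\LPi}^2
\;\ge\;\frac{\delta_n^2}{4}\Bigl(1-\tfrac{\log N(\varepsilon_n,\calF)+n\varepsilon_n^2/2+\log 2}{\log Q(\delta_n,\calF)}\Bigr),
\]
and then controls the packing/covering numbers of the product ball $\calHl{\infty}(R)$ by those of the single ball $\repH(R)$ through the inequalities $\log Q(\delta,\calHl{\infty}(R))\ge C M\log Q(\delta/\sqrt M,\repH(R))$ and $\log N(\varepsilon,\calHl{\infty}(R))\le M\log N(\varepsilon/\sqrt M,\repH(R))$. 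This yields the $p=\infty$ rate directly from the entropy asymptotics $\log N(\varepsilon,\repH(R))\asymp(\varepsilon/R)^{-2s}$, and the general $p$ is then obtained by the inclusion $\calHl{\infty}(R/M^{1/p})\subset\calHlp(R)$ --- which is exactly your ``equal split'' $\rho=RM^{-1/p}$, arrived at from the other direction.

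So the reduction to equal-budget blocks is shared; what differs is how the packing is exhibited. Your explicit Varshamov--Gilbert construction on the top $D$ eigenfunctions makes transparent \emph{where} the lower spectral bound in (A6) enters (to certify $\|\tilde f_m\|_{\repH}\le\rho$), and avoids citing the Raskutti--Wainwright lemma. The paper's metric-entropy argument is shorter and reuses the covering-number equivalence with (A3)/(A6) already stated in the preliminaries, but the role of the spectral \emph{lower} bound is hidden inside the two-sided entropy estimate $\log N(\varepsilon,\repH(R))\asymp (\varepsilon/R)^{-2s}$. Either approach is standard; yours is the more self-contained of the two.
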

The proof will be given in Appendix \ref{sec:proofOfMinimax}.
One can see that the convergence rate derived in Theorem \ref{th:convergencerateofLpMKL} and \Eqref{eq:convergenceRateSimple} 
achieves the lower bound of Theorem \ref{th:LowerBounds}.  
Thus our bound is tight.
Interestingly, the learning rate \eqref{eq:convergenceRateSimple} of \ellp-MKL and the minimax learning rate 
on $\ell_p$-mixed-norm ball coincide at the common $p$. 
This means that the \ellp-mixed-norm regularization is well suited to make the estimator included in the \ellp-mixed-norm ball.

\section{Conclusion and Discussion}
We have shown a sharp optimal learning rate of \ellp-MKL by utilizing the localization techniques.
Our bound is sharper than existing bounds and achieves the minimax learning rate under the Spectral Assumption (A3).

There still remain important future works. 
The bound given in \Eqref{eq:convergenceRateSimple} becomes smaller 
as $p$ becomes smaller since $R_p/M^{\frac{1}{p}}$ decreases as $p \searrow 1$.
That is, according to the theoretical result, \ellp-MKL shows the best performance at $p=1$ 
despite the disappointing results of $p=1$ reported by some numerical experiments.  
This concern was also pointed out by \cite{ICML:Cortes+etal:gbound:2010}. 
It is an important future work to theoretically clarify why \ellp-MKL with $p>1$ works well in some real situations.
The second interesting future work is about the $\frac{M \log(M)}{n}$ term appeared in the bound \Eqref{eq:convergenceRateComplex}. 
Because of this term, our bound is $O(M\log(M))$ with respect to $M$ while in the existing work that is $O(M^{1-\frac{1}{p}})$. 
It is an interesting issue to clarify whether the term $\frac{M \log(M)}{n}$ can be replaced by other tighter bounds or not.
To do so, it might be useful to precisely estimate the covering number of $\calH_{\ell_p}(R)$.

\section*{Acknowledgement}
We would like to thank Ryota Tomioka and Masashi Sugiyama for suggestive discussions. 
This work was partially supported by MEXT Kakenhi 22700289.

\appendix 
\section{Proof of Theorem \ref{th:convergencerateofLpMKL}}
\label{sec:ProofMainTh}
Before we show Theorem \ref{th:convergencerateofLpMKL}, we prepare several lemmas. 
The following two propositions are key for localization. 
Let $\{\sigma_i\}_{i=1}^n$ be i.i.d. Rademacher random variables, i.e., $\sigma_i \in \{\pm 1\}$ and $P(\sigma_i=1) = P(\sigma_i=-1)=\frac{1}{2}$.  
\begin{Proposition}{\rm \bf \cite[Theorem 7.16]{Book:Steinwart:2008}}
\label{prop:localFmsBound}
Let $\calB_{\sigma,a,b} \subset \calH_m$ be a set such that $\calB_{\sigma,a,b} = \{ f_m \in \calH_m \mid \|f_m\|_{\LPi}\leq \sigma, \|f_m\hnorm{m} \leq a, \|f_m\|_{\infty} \leq b\} $.
Assume that there exist constants $0<s<1$ and $0 < \tilde{c}_s$ such that 
\begin{align*}
\EE_{D_n}[ e_i(\calH_m \to L_2(D_n))] \leq \tilde{c}_s i^{-\frac{1}{2s}}.
\end{align*}
Then there exists a constant $C_s'$ depending only $s$ such that 
\begin{align}
\EE\left[\sup_{f_m \in \calB_{\sigma,a,b}} \left| \frac{1}{n}\sum_{i=1}^n \sigma_i f_m(x_i) \right|\right] \leq 
C_s' \left( \frac{ \sigma^{1-s} (\tilde{c}_s a)^s}{\sqrt{n}} \vee (\tilde{c}_s a)^{\frac{2s}{1+s}} b^{\frac{1-s}{1+s}} n^{-\frac{1}{1+s}} \right). 
\label{eq:localFmsBound}
\end{align}
\end{Proposition}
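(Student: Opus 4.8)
This proposition is quoted from \citet[Theorem 7.16]{Book:Steinwart:2008}; here is the route I would take to establish it. Write $\calB := \calB_{\sigma,a,b}$. Since $\calB$ is symmetric and contains $0$, the quantity in question is the expected Rademacher complexity $Q := \EE_{D_n}\EE_\sigma \sup_{f_m\in\calB}\frac1n\sum_{i=1}^n\sigma_i f_m(x_i)$. The plan is: (i) bound the \emph{conditional} Rademacher complexity $\widehat{\mathfrak R}_n(\calB) := \EE_\sigma\sup_{f_m\in\calB}\frac1n\sum_i\sigma_i f_m(x_i)$ by an entropy integral in $L_2(D_n)$; (ii) use the RKHS-norm bound $a$ together with the entropy-number hypothesis to evaluate that integral; (iii) control the \emph{random} empirical radius $\sup_{f_m\in\calB}\|f_m\|_{L_2(D_n)}$ by the population radius $\sigma$ through a self-bounding localization argument; (iv) solve the resulting self-referential inequality for $Q$.

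\textbf{Entropy integral and the first branch.} Conditionally on $D_n$, Dudley's chaining bound gives
\[
\widehat{\mathfrak R}_n(\calB) \;\lesssim\; \frac{1}{\sqrt n}\int_0^{2 r_n}\sqrt{\log N(\e,\calB,L_2(D_n))}\,\dd\e,
\qquad r_n := \sup_{f_m\in\calB}\|f_m\|_{L_2(D_n)}.
\]
Since $\calB$ lies in the $\calH_m$-ball of radius $a$, one has $\log N(\e,\calB,L_2(D_n)) \le \log N(\e/a,\mathcal{B}_{\calH_m},L_2(D_n))$, and Proposition \ref{prop:upperboundofe} (equivalently the entropy-number hypothesis) makes the right-hand side comparable, after taking expectation over $D_n$, to $(\tilde c_s a/\e)^{2s}$. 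Because $s<1$ the integral $\int_0^{2r_n}(\tilde c_s a/\e)^s\,\dd\e$ converges and is $\lesssim (\tilde c_s a)^s r_n^{1-s}$, so $\widehat{\mathfrak R}_n(\calB) \lesssim (\tilde c_s a)^s r_n^{1-s}/\sqrt n$; if $r_n$ were simply $\le\sigma$ this would already give the first branch of \eqref{eq:localFmsBound}.

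\textbf{Localizing the empirical radius and the second branch.} Since $\|f_m\|_\infty\le b$ for every $f_m\in\calB$, the map $t\mapsto t^2$ is $2b$-Lipschitz on $[-b,b]$, so by symmetrization and the Rademacher contraction inequality $\EE_{D_n}\sup_{f_m\in\calB}\bigl|\,\|f_m\|_{L_2(D_n)}^2-\|f_m\|_{L_2(\Pi)}^2\bigr| \lesssim b\,Q$, whence $\EE_{D_n}[r_n^2]\le \sigma^2 + C b Q$. Applying Jensen (concavity of $x\mapsto x^{(1-s)/2}$) to step (ii) then yields
\[
Q \;\lesssim\; \frac{(\tilde c_s a)^s}{\sqrt n}\,\bigl(\sigma^2 + bQ\bigr)^{\frac{1-s}{2}} \;\lesssim\; \frac{(\tilde c_s a)^s\sigma^{1-s}}{\sqrt n} \;+\; \frac{(\tilde c_s a)^s (bQ)^{\frac{1-s}{2}}}{\sqrt n}.
\]
Either the first term on the right dominates, giving $Q\lesssim (\tilde c_s a)^s\sigma^{1-s}/\sqrt n$, or the second does, in which case $Q^{\frac{1+s}{2}}\lesssim (\tilde c_s a)^s b^{\frac{1-s}{2}}/\sqrt n$, i.e.\ $Q\lesssim (\tilde c_s a)^{\frac{2s}{1+s}}b^{\frac{1-s}{1+s}}n^{-\frac{1}{1+s}}$. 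Taking the maximum of the two possibilities gives \eqref{eq:localFmsBound} with a constant $C_s'$ depending only on $s$.

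\textbf{Main obstacle.} The delicate points are: (a) making the localization step rigorous — the chaining bound in step (ii) involves the random empirical entropy numbers $e_i(\calH_m\to L_2(D_n))$, so one must carefully interleave the expectations over $D_n$ and $\sigma$ (via Fubini and the Jensen steps above, together with Proposition \ref{prop:upperboundofe}) so that the self-bounding inequality for $Q$ actually closes; and (b) recognizing that it is precisely the gap between the empirical radius $r_n$ and the population radius $\sigma$ — a gap controlled only through the sup-norm bound $b$ — that produces the fast $n^{-1/(1+s)}$ branch; without exploiting $b$ one obtains only the Dudley term.
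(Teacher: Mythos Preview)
The paper does not supply its own proof of this proposition: it is simply quoted from \cite[Theorem 7.16]{Book:Steinwart:2008} and used as a black box in the proof of Lemma~\ref{lemm:uniformratiobound}. So there is no ``paper's proof'' to compare against; one can only ask whether your sketch reproduces the argument behind the cited result.

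It does. Your route --- Dudley's entropy integral for the conditional Rademacher average, evaluated via the polynomial entropy decay to get the $r_n^{1-s}(\tilde c_s a)^s/\sqrt n$ term, followed by the symmetrization-plus-contraction bound $\EE[r_n^2]\le\sigma^2+CbQ$ and the resulting self-referential inequality $Q\lesssim n^{-1/2}(\tilde c_s a)^s(\sigma^2+bQ)^{(1-s)/2}$ --- is exactly the localization mechanism that produces the two branches of~\eqref{eq:localFmsBound}, and is the same strategy Steinwart's proof follows. Your dichotomy in step~(iv) is the correct way to read off the $n^{-1/2}$ and $n^{-1/(1+s)}$ regimes.

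The only point that is genuinely delicate, and that you correctly flag in obstacle~(a), is that the entropy hypothesis is stated \emph{in expectation over $D_n$}, while the Dudley bound in step~(ii) is conditional and involves the random $r_n$ together with the random empirical entropy numbers; one cannot simply replace each factor by its expectation. Steinwart's treatment handles this by working with an entropy-number version of the chaining bound (his Theorem~7.13) that, for the polynomial profile $i^{-1/(2s)}$, factors cleanly enough that taking $\EE_{D_n}$ and applying Jensen to $r_n^{1-s}$ (exactly as you do) closes the inequality. This is a technical refinement of your step~(ii) rather than a different idea, so your sketch is sound as a proof plan.
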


\begin{Proposition}{\rm \bf (Talagrand's Concentration Inequality \citep{Talagrand2,BousquetBenett})}
\label{prop:TalagrandConcent}
Let $\calG$ be a function class on $\calX$ that is separable with respect to $\infty$-norm, and 
$\{x_i\}_{i=1}^n$ be i.i.d. random variables with values in $\calX$.
Furthermore, let $B\geq 0$ and $U\geq 0$ be 
$B := \sup_{g \in \calG} \EE[(g-\EE[g])^2]$ and $U := \sup_{g \in \calG} \|g\|_{\infty}$,
then there exists a universal constant $K$ such that, for $Z := \sup_{g\in \calG}\left|\frac{1}{n} \sum_{i=1}^n g(x_i) - \EE[g] \right|$, we have
\begin{align*}
P\left( Z \geq K\left[\EE[Z] + \sqrt{\frac{B t}{n}} + \frac{U t}{n} \right] \right) \leq e^{-t}.
\end{align*}
\end{Proposition}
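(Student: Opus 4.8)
The statement is the Bousquet form of Talagrand's concentration inequality for the supremum of an empirical process, and the plan is to reproduce its standard proof via the \emph{entropy method}. First I would pass to the unnormalized quantity $S := \sup_{g\in\calG}\left|\sum_{i=1}^n (g(x_i) - \EE[g])\right| = n Z$, noting that the assumed separability of $\calG$ with respect to the $\infty$-norm guarantees that $S$ is measurable and that the supremum is attained (or approached along a countable dense family), so that all the conditional expectations appearing below are well defined. It suffices to treat the upper tail: the required two-sided bound on $Z$ then follows by applying the one-sided estimate to both $\calG$ and $-\calG$ and absorbing the factor of two into the universal constant $K$.

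The engine of the proof is the sub-additivity (tensorization) of entropy for a function of the independent coordinates $x_1,\dots,x_n$. Writing $\mathrm{Ent}(W) := \EE[W\log W] - \EE[W]\log\EE[W]$ for a nonnegative $W$, and letting $\mathrm{Ent}_i$ denote entropy taken only over the $i$-th coordinate with the others frozen, tensorization gives
\begin{align*}
\mathrm{Ent}\!\left(e^{\lambda S}\right) \leq \sum_{i=1}^n \EE\!\left[\mathrm{Ent}_i\!\left(e^{\lambda S}\right)\right].
\end{align*}
The crux is to bound each conditional-entropy term. Let $S^{(i)}$ be the leave-one-out supremum with the $i$-th summand deleted, which does not depend on $x_i$; using $0 \leq S - S^{(i)}$, the uniform bound $U$ on $\|g\|_\infty$, and the variational estimate $\mathrm{Ent}_i(e^{\lambda S}) \leq \EE_i[\phi(\lambda(S - S^{(i)}))\, e^{\lambda S}]$ for a suitable convex $\phi$, I would show that $\sum_i \EE_i[(S - S^{(i)})^2]$ is controlled by the \emph{weak variance} $nB$, together with a term proportional to $U\,\EE[S]$. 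Producing exactly $B = \sup_g \EE[(g - \EE[g])^2]$ here, rather than the cruder $\sum_i \sup_g \EE[g^2]$, is precisely Bousquet's refinement and the delicate part of the argument.

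Finally I would run the \emph{Herbst argument}: the tensorized bound translates into a differential inequality for $H(\lambda) := \log\EE\!\left[e^{\lambda(S - \EE[S])}\right]$, whose integration yields a Bennett-type bound $H(\lambda) \leq \frac{v\lambda^2}{2(1 - U\lambda/3)}$ with variance proxy $v \lesssim nB + U\,\EE[S]$. A Chernoff optimization over $\lambda$ and inversion of the tail then give $P\!\left(S \geq \EE[S] + \sqrt{2vt} + \tfrac{Ut}{3}\right) \leq e^{-t}$; dividing through by $n$, using $\EE[S] = n\,\EE[Z]$, splitting $\sqrt{2vt} \lesssim \sqrt{nBt} + \sqrt{U\,\EE[S]\,t}$ and absorbing the cross term via $\sqrt{U\,\EE[S]\,t} \leq \tfrac12(\EE[S] + Ut)$ into the leading terms, recovers the stated inequality with a single universal constant $K$. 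The main obstacle is the conditional-entropy step: because the supremum functional is non-smooth and $\calG$ may be large, the comparison between $S$ and its leave-one-out version must be carried out uniformly (which is where separability is used to select near-maximizers), and squeezing out exactly the weak variance $B$ demands the careful convexity estimates of Bousquet rather than the simpler bounded-differences route.
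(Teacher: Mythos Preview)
The paper does not prove this proposition at all: it is stated as a known result with citations to Talagrand and Bousquet, and is then used as a black box in the subsequent lemmas (e.g., in the proofs of Lemma~\ref{lemm:basicuniformlemm}, Lemma~\ref{lemm:uniformRatioBoundOnM}, and Theorem~\ref{th:SquareBound}). So there is no ``paper's own proof'' to compare against; the authors simply import the inequality.

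Your sketch is a faithful outline of Bousquet's entropy-method proof and is correct at the level of detail you give. The tensorization of entropy, the leave-one-out comparison to extract the weak variance $B$, the Herbst differential inequality, and the final absorption of the cross term $\sqrt{U\,\EE[S]\,t}$ via the AM--GM split are exactly the standard steps, and you correctly flag that the delicate point is obtaining the weak variance $\sup_g \EE[(g-\EE[g])^2]$ rather than the cruder $\sum_i \sup_g$ bound. Since the paper merely cites the result, your write-up goes well beyond what the paper itself provides.
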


Let $\lambdatmp >0$ be an arbitrary positive real. 
We determine $\Uns(f_m)$ as follows:
\begin{align*}
\Uns(f_m) := &\left(\sqrt{\frac{M\log(M)}{n}}\vee 
\frac{\lambdatmp^{-\frac{s}{2}} M^{\frac{1-s}{2} + s(1-\frac{1}{p})}}{\sqrt{n}} \vee 
\frac{\lambdatmp^{-\frac{(3-s)}{2(1+s)}} M^{\frac{1+4s -s^2}{2(1+s)} - \frac{s(3-s)}{p(1+s)}}}{n^{\frac{1}{1+s}}}  \right) \times \\
&
\left(\frac{\|f_m\|_{\LPi}}{\sqrt{M}} + \frac{\lambdatmp^{\frac{1}{2}} \|f_m\hnorm{m}}{M^{1-\frac{1}{p}}}\right).
\end{align*} 
It is easy to see $\Uns(f_m)$ is an upper bound of the quantity  
$\frac{\|f_m\|_{\LPi}^{1-s}\|f_m \hnorm{m}^s}{\sqrt{n}} \vee 
\frac{\|f_m \|_{\LPi}^{\frac{(1-s)^2}{1+s}} \|f_m \hnorm{m}^{\frac{s(3-s)}{1+s}}}{n^{\frac{1}{1+s}}}$
(this corresponds to the RHS of \Eqref{eq:localFmsBound})
because 
\begin{align}
\frac{\|f_m\|_{\LPi}^{1-s}\|f_m \hnorm{m}^s}{\sqrt{n}} 
&
=\frac{\lambdatmp^{-\frac{s}{2}} M^{\frac{1-s}{2}+s(1-\frac{1}{p})}}{\sqrt{n}} 
\left(\frac{\|f_m\|_{\LPi}}{\sqrt{M}}\right)^{1-s} \left(\frac{\lambdatmp^{\frac{1}{2}}\|f_m \hnorm{m}}{M^{1-\frac{1}{p}}}\right)^s \notag \\
&\leq 
\frac{\lambdatmp^{-\frac{s}{2}} M^{\frac{1-s}{2}+s(1-\frac{1}{p})}}{\sqrt{n}} 
\left(\frac{\|f_m\|_{\LPi}}{\sqrt{M}} + \frac{\lambdatmp^{\frac{1}{2}}\|f_m \hnorm{m}}{M^{1-\frac{1}{p}}}\right), 
\label{eq:sdecomptwo}
\end{align}
where we used Young's inequality in the last line, and similarly we obtain
\begin{align*}
\frac{\|f_m \|_{\LPi}^{\frac{(1-s)^2}{1+s}} \|f_m \hnorm{m}^{\frac{s(3-s)}{1+s}}}{n^{\frac{1}{1+s}}}
&\leq 
\frac{\lambdatmp^{-\frac{(3-s)}{2(1+s)}} M^{\frac{(1-s)^2}{2(1+s)} + (1-\frac{1}{p})\frac{s(3-s)}{1+s}}}{n^{\frac{1}{1+s}}} 
\left(\frac{\|f_m\|_{\LPi}}{\sqrt{M}} + \frac{\lambdatmp^{\frac{1}{2}}\|f_m \hnorm{m}}{M^{1-\frac{1}{p}}}\right).
\end{align*}
Using Propositions \ref{prop:TalagrandConcent} and \ref{prop:localFmsBound}, 
we obtain the following ratio type uniform bound.
\begin{Lemma}
\label{lemm:uniformratiobound}
Under the Spectral Assumption (Assumption \ref{eq:specass}) and the Embedded Assumption (Assumption \ref{ass:linfbound}), 
there exists a constant $C_s$ depending only on $s$, $c$ and $C_1$ such that 
\begin{align}
\EE\left[\sup_{f_m \in \calH_m: \|f_m \hnorm{m}=1} 
\frac{|\frac{1}{n}\sum_{i=1}^n \sigma_i f_m(x_i)|}{\Uns(f_m)} \right] 
\leq C_s.
\end{align}
\end{Lemma}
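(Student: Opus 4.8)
This is a ratio-type estimate, so the plan is to run the peeling device of \citet{Book:VanDeGeer:EmpiricalProcess} with the slicing done according to $\|f_m\|_{\LPi}$. Since the reproducing property together with Assumption (A2) gives $\|f_m\|_{\LPi}\le\|f_m\|_\infty<\|f_m\hnorm{m}=1$, and functions with $\|f_m\|_{\LPi}=0$ vanish $\Pi$-a.e. and hence contribute $0$ to the supremum almost surely (separability of $\calH_m$ in (A2) lets us pass to a countable dense set), it suffices to control the dyadic shells $S_j:=\{f_m:\|f_m\hnorm{m}=1,\ 2^{-j-1}<\|f_m\|_{\LPi}\le 2^{-j}\}$ for $j\ge 0$. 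Using $\sup_j a_j\le\sum_j a_j$ for nonnegative $a_j$,
\begin{align*}
\EE\Bigl[\sup_{f_m:\|f_m\hnorm{m}=1}\frac{\bigl|\frac1n\sum_i\sigma_i f_m(x_i)\bigr|}{\Uns(f_m)}\Bigr]
\ \le\ \sum_{j\ge 0}\frac{1}{\inf_{f_m\in S_j}\Uns(f_m)}\,\EE\Bigl[\sup_{f_m\in S_j}\Bigl|\tfrac1n{\textstyle\sum_i}\sigma_i f_m(x_i)\Bigr|\Bigr],
\end{align*}
and the job reduces to bounding each summand and summing.

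For the denominator, $\Uns(f_m)$ is affine and increasing in $\|f_m\|_{\LPi}$ when $\|f_m\hnorm{m}$ is fixed, so $\inf_{f_m\in S_j}\Uns(f_m)$ equals the value of $\Uns$ at $\|f_m\|_{\LPi}=2^{-j-1}$, $\|f_m\hnorm{m}=1$, namely the leading max-factor of $\Uns$ times $\bigl(\tfrac{2^{-j-1}}{\sqrt M}+\tfrac{\lambdatmp^{1/2}}{M^{1-1/p}}\bigr)$. For the numerator, on $S_j$ the Embedded Assumption (A5) gives $\|f_m\|_\infty\le C_1\|f_m\hnorm{m}^{1-s}\|f_m\|_{\LPi}^{s}\le C_1 2^{-js}$, so $S_j\subseteq\calB_{2^{-j},\,1,\,C_1 2^{-js}}$; the Spectral Assumption (A3), via \eqref{eq:entropycondition} and Proposition~\ref{prop:upperboundofe}, supplies the hypothesis $\EE_{D_n}[e_i(\calH_m\to L_2(D_n))]\le\tilde c_s i^{-1/(2s)}$ of Proposition~\ref{prop:localFmsBound} with $\tilde c_s$ depending only on $s$ and $c$, and that proposition then bounds $\EE[\sup_{f_m\in S_j}|\tfrac1n\sum_i\sigma_i f_m(x_i)|]$ by $C_s'\bigl(\tfrac{2^{-j(1-s)}\tilde c_s^{\,s}}{\sqrt n}\vee\tilde c_s^{\,2s/(1+s)}(C_1 2^{-js})^{(1-s)/(1+s)}n^{-1/(1+s)}\bigr)$.

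Dividing, the $j$-th summand is exactly the quantity the Young-inequality computation preceding the lemma is built to handle: there one verifies that, for this $\zeta_n$, $\Uns(f_m)$ dominates $\tfrac{\|f_m\|_{\LPi}^{1-s}\|f_m\hnorm{m}^{s}}{\sqrt n}\vee\tfrac{\|f_m\|_{\LPi}^{(1-s)^2/(1+s)}\|f_m\hnorm{m}^{s(3-s)/(1+s)}}{n^{1/(1+s)}}$, which is the local Rademacher bound \eqref{eq:localFmsBound} up to the $\tilde c_s$, $C_1$ factors after feeding (A5) into $b$. Since the $\|f_m\|_{\LPi}$-dependence of $\Uns$ is affine while that of \eqref{eq:localFmsBound} is a strictly sublinear power (exponents $1-s$ and $\tfrac{s(1-s)}{1+s}$, both $<1$), and since $\Uns$ carries the positive floor $\tfrac{\lambdatmp^{1/2}\|f_m\hnorm{m}}{M^{1-1/p}}$ below which it cannot shrink, the $j$-th summand decays geometrically in $|j-j^{\ast}|$ around the single index $j^{\ast}$ at which $\tfrac{2^{-j}}{\sqrt M}$ and $\tfrac{\lambdatmp^{1/2}}{M^{1-1/p}}$ balance; the $M$- and $\lambdatmp$-powers in $\zeta_n$ are chosen precisely so that the summand at $j^{\ast}$ is $O(1)$. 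Summing the geometric series and renaming constants gives the asserted bound $C_s$ (depending on $s$, $c$, $C_1$).

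\textbf{Main obstacle.} The delicate step is this last one: the case analysis over the two regimes of the maximum in Proposition~\ref{prop:localFmsBound} and over which summand of the $\|f_m\|_{\LPi}$-factor of $\Uns$ dominates, and the check that the exponent cancellations in $\zeta_n$ (the two displays just before the lemma, in both the $n^{-1/2}$ and the $n^{-1/(1+s)}$ regime) really do leave the pivot summand bounded by an absolute constant, so that the series converges uniformly in $n$, $M$ and $\lambdatmp$. A subsidiary point is threading the $\|f_m\|_\infty$-bound from (A5) through Proposition~\ref{prop:localFmsBound} so that \eqref{eq:localFmsBound} is dominated by $\Uns$ at every dyadic scale, not merely at $j^{\ast}$.
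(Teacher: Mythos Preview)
Your proposal is correct and uses essentially the same peeling argument over $\|f_m\|_{\LPi}$ as the paper, with Proposition~\ref{prop:localFmsBound} (fed by Proposition~\ref{prop:upperboundofe} and (A5)) supplying the numerator bound on each shell. The paper's organization differs only in bookkeeping: it fixes the pivot radius $\tau=\lambdatmp^{s/2}M^{1/2-1/p}$ explicitly, handles the inner region $\{\|f_m\|_{\LPi}\le\tau\}$ as a single ball (lower-bounding $\Uns$ there by its floor term $\propto\lambdatmp^{1/2}/M^{1-1/p}$), and peels only outward with ratio $z=2^{1/s}$, so that the exponent cancellation you identify as the main obstacle reduces to the single geometric series $\sum_{k\ge1}z^{1-ks}\vee z^{1-ks(3-s)/(1+s)}$ rather than a two-sided decay around $j^*$.
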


\begin{proof} [Proof of Lemma \ref{lemm:uniformratiobound}]
Let $\calH_m(\sigma) := \{ f_m \in \calH_m \mid \|f_m\hnorm{m} = 1, \|f_m\|_{\LPi} \leq \sigma \}$ and $z = 2^{1/s} > 1$.
Define $\tau := \lambdatmp^{\frac{s}{2}}M^{\frac{1}{2}-\frac{1}{p}}$. Then 
by combining Propositions \ref{prop:upperboundofe} and \ref{prop:localFmsBound} with Assumption \ref{ass:linfbound}, we have 
\begin{align*}
&
\EE\left[\sup_{f_m \in \calH_m: \|f_m \hnorm{m}=1} 
\frac{|\frac{1}{n}\sum_{i=1}^n \sigma_i f_m(x_i)|}{\Uns(f_m)} \right]  \\
\leq 
&
\EE\left[\sup_{f_m \in \calH_m(\tau) } 
\frac{|\frac{1}{n}\sum_{i=1}^n \sigma_i f_m(x_i)|}{\Uns(f_m)} \right] 
+
\sum_{k= 1}^{\infty}
\EE\left[\sup_{f_m \in \calH_m(\tau z^k) \backslash \calH_m(\tau z^{k-1})} 
\frac{|\frac{1}{n}\sum_{i=1}^n \sigma_i f_m(x_i)|}{\Uns(f_m)} \right] \\
\leq 
&
C_s'  \frac{\frac{\tau^{1-s}\tilde{c}_s^s}{\sqrt{n}}}
{ \frac{\lambdatmp^{-\frac{s}{2}}M^{\frac{1+s}{2}-\frac{s}{p}}}{\sqrt{n}} \frac{\lambdatmp^{\frac{1}{2}}}{M^{1-\frac{1}{p}}}} 
\vee 
\frac{\frac{C_1^{\frac{1-s}{1+s}} \tau^{\frac{(1-s)^2}{1+s}}\tilde{c}_s^{\frac{2s}{1+s}}}{n^{\frac{1}{1+s}}}}
{ \frac{\lambdatmp^{-\frac{s(3-s)}{1+s}}M^{\frac{(1-s)^2}{2(1+s)}+(1-\frac{1}{p})\frac{s(3-s)}{1+s}}}{n^{\frac{1}{1+s}}} \frac{\lambdatmp^{\frac{1}{2}}}{M^{1-\frac{1}{p}}} } \\
&+
\sum_{k=1}^\infty
C_s' 
\frac{\frac{z^k(1-s) \tau^{1-s}\tilde{c}_s^s}{\sqrt{n}}}
{ \frac{\lambdatmp^{-\frac{s}{2}}M^{\frac{1+s}{2}-\frac{s}{p}}}{\sqrt{n}} \frac{\tau z^{k-1}}{\sqrt{M}}} 
\vee 
\frac{\frac{C_1^{\frac{1-s}{1+s}} z^{k\frac{(1-s)^2}{1+s}} \tau^{\frac{(1-s)^2}{1+s}}\tilde{c}_s^{\frac{2s}{1+s}}}{n^{\frac{1}{1+s}}}}
{ \frac{\lambdatmp^{-\frac{s(3-s)}{1+s}}M^{\frac{(1-s)^2}{2(1+s)}+(1-\frac{1}{p})\frac{s(3-s)}{1+s}}}{n^{\frac{1}{1+s}}} \frac{\tau z^{k-1}}{\sqrt{M}}} \\
& 
\leq  
C_s' \left(\tilde{c}_s^{s} \vee C_1^{\frac{1-s}{1+s}} \tilde{c}_s^{\frac{2s}{1+s}}\right)\left( 1 +  \sum_{k=1}^\infty z^{1 - ks} \vee z^{1 - k\frac{s(3-s)}{1+s}} \right) \\
&
= 
C_s'  \left(\tilde{c}_s^{s} \vee C_1^{\frac{1-s}{1+s}} \tilde{c}_s^{\frac{2s}{1+s}}\right)\left(1 +  \frac{z^{1-s}}{1-z^{-s}}\vee \frac{z^{1-\frac{s(3-s)}{1+s}}}{1-z^{-\frac{s(3-s)}{1+s}}} \right) .
\end{align*}
Thus by setting, $C_s = C_s'  \left(\tilde{c}_s^{s} \vee \tilde{c}_s^{\frac{2s}{1+s}}\right)\left(1 +  \frac{z^{1-s}}{1-z^{-s}}\vee \frac{z^{1-\frac{s(3-s)}{1+s}}}{1-z^{-\frac{s(3-s)}{1+s}}} \right)$,
we obtain the assertion.
\end{proof}

This lemma immediately gives the following corollary.
\begin{Corollary}
\label{cor:basicuniformcor}
Under the Spectral Assumption (Assumption \ref{eq:specass}) and the Embedded Assumption (Assumption \ref{ass:linfbound}),  
there exists a constant $C_s$ depending only on $s$ and $C$ such that 
\begin{align*}
\EE\left[\sup_{f_m \in \calH_m} 
\frac{|\frac{1}{n}\sum_{i=1}^n \sigma_i f_m(x_i)|}
{\Uns(f_m)}\right] 
\leq C_s. 
\end{align*}
\end{Corollary}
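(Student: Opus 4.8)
The plan is to obtain the corollary from Lemma \ref{lemm:uniformratiobound} by a scaling argument, so there is essentially no new work to do beyond one observation. The point I would emphasize first is that the quantity inside the expectation is invariant under the rescaling $f_m \mapsto t f_m$ for every $t>0$: the numerator $\frac{1}{n}\sum_{i=1}^n \sigma_i f_m(x_i)$ is linear in $f_m$, hence positively homogeneous of degree one, and the envelope $\Uns(f_m)$ equals a positive quantity not depending on $f_m$ times $\frac{\|f_m\|_{\LPi}}{\sqrt{M}} + \frac{\lambdatmp^{1/2}\|f_m\hnorm{m}}{M^{1-1/p}}$, which is again positively homogeneous of degree one because both $\|\cdot\|_{\LPi}$ and $\|\cdot\hnorm{m}$ are. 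Consequently $\Uns(t f_m) = t\,\Uns(f_m)$ and the ratio $\bigl|\frac{1}{n}\sum_{i=1}^n \sigma_i f_m(x_i)\bigr|/\Uns(f_m)$ is unchanged when $f_m$ is rescaled.

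Given this, I would then reduce the unconstrained supremum to the one over the unit sphere $\{f_m \in \calH_m : \|f_m\hnorm{m} = 1\}$. For $f_m = 0$ both numerator and denominator vanish and we adopt the convention that this contributes $0$, so it does not affect the supremum; for $f_m \neq 0$ we have $\|f_m\hnorm{m} > 0$, so $g_m := f_m/\|f_m\hnorm{m}$ satisfies $\|g_m\hnorm{m} = 1$ and, by the homogeneity above,
\[
\frac{\bigl|\frac{1}{n}\sum_{i=1}^n \sigma_i f_m(x_i)\bigr|}{\Uns(f_m)} = \frac{\bigl|\frac{1}{n}\sum_{i=1}^n \sigma_i g_m(x_i)\bigr|}{\Uns(g_m)}.
\]
Hence $\sup_{f_m \in \calH_m} \bigl|\tfrac{1}{n}\sum_i \sigma_i f_m(x_i)\bigr|/\Uns(f_m) = \sup_{f_m:\,\|f_m\hnorm{m}=1} \bigl|\tfrac{1}{n}\sum_i \sigma_i f_m(x_i)\bigr|/\Uns(f_m)$, and taking expectations of both sides and applying Lemma \ref{lemm:uniformratiobound} gives the bound with the same constant $C_s$.

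I do not expect any real obstacle here: the only point requiring a word of care is the $0/0$ evaluation at $f_m = 0$, which is purely a matter of convention, together with the mild verification that $\Uns$ is built solely from norms of $f_m$ that are positively homogeneous of degree one.
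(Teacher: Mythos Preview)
Your proposal is correct and matches the paper's approach: the paper gives no proof at all for this corollary, stating only that it follows ``immediately'' from Lemma~\ref{lemm:uniformratiobound}, and your homogeneity/rescaling argument is exactly the natural one-line justification for that claim.
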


\begin{Lemma}
\label{lemm:basicuniformlemm}
If $\frac{\log(M)}{\sqrt{n}}\leq 1$, then under the Spectral Assumption (Assumption \ref{eq:specass}) and the Embedded Assumption (Assumption \ref{ass:linfbound})  
there exists a constant $\tilde{C}_s$ depending only on $s$, $c$, $C_1$ such that 
\begin{align*}
\EE\left[\max_m \sup_{f_m \in \calH_m} 
\frac{|\frac{1}{n}\sum_{i=1}^n \sigma_i f_m(x_i)|}{\Uns(f_m)} \right] 
\leq \tilde{C}_s. 
\end{align*}
\end{Lemma}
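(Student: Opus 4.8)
To establish Lemma~\ref{lemm:basicuniformlemm}, the plan is to upgrade the per-index estimate of Corollary~\ref{cor:basicuniformcor} to a bound on the maximum over $m$ by a concentration argument together with a union bound. Write $Z_m := \sup_{f_m\in\calH_m}\frac{|\frac1n\sum_{i=1}^n\sigma_i f_m(x_i)|}{\Uns(f_m)}$, so that Corollary~\ref{cor:basicuniformcor} already supplies $\EE[Z_m]\le C_s$ for every $m$. First I would note that the ratio defining $Z_m$ is invariant under multiplication of $f_m$ by a positive scalar, since both $|\frac1n\sum_i\sigma_i f_m(x_i)|$ and $\Uns(f_m)$ are positively homogeneous of degree one in $f_m$; hence the supremum may be restricted to the sphere $\{f_m\in\calH_m:\|f_m\hnorm{m}=1\}$, on which Assumption~(A2) gives $\|f_m\|_{\LPi}\le\|f_m\hnorm{m}<1$ and $\|f_m\|_\infty<1$.

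For each fixed $m$ I would then apply Talagrand's inequality (Proposition~\ref{prop:TalagrandConcent}) to the class $\calG_m=\{(x,\sigma)\mapsto\pm\,\sigma f_m(x)/\Uns(f_m):\|f_m\hnorm{m}=1\}$ of functions of the i.i.d.\ pairs $(x_i,\sigma_i)$: these are centered ($\EE[\sigma f_m(x)]=0$), the class is separable with respect to the $\infty$-norm by~(A2), and $Z_m=\sup_{g\in\calG_m}|\frac1n\sum_i g(x_i,\sigma_i)-\EE g|$. The two constants entering Proposition~\ref{prop:TalagrandConcent} are $B_m=\sup_{f_m}\|f_m\|_{\LPi}^2/\Uns(f_m)^2$ and $U_m=\sup_{f_m}\|f_m\|_\infty/\Uns(f_m)$. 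For $B_m$ it suffices to keep only the first term $\sqrt{M\log(M)/n}$ of the leading factor of $\Uns(f_m)$, which gives $\Uns(f_m)\ge\sqrt{\log(M)/n}\,\|f_m\|_{\LPi}$, hence $B_m\le n/\log(M)$ and $\sqrt{B_m\log(M)/n}\le1$. For $U_m$ I would bound the numerator by Assumption~(A5), $\|f_m\|_\infty\le C_1\|f_m\hnorm{m}^{1-s}\|f_m\|_{\LPi}^s$, and bound $\Uns(f_m)$ from below by applying the weighted AM--GM inequality to its two summands $\|f_m\|_{\LPi}/\sqrt M$ and $\lambdatmp^{1/2}\|f_m\hnorm{m}/M^{1-1/p}$; after the factors $\|f_m\|_{\LPi}^s$ and $\|f_m\hnorm{m}^{1-s}$ cancel, one is left with $U_m\le c_s\,C_1\,M^{s/2+(1-1/p)(1-s)}/(\Lambda_n\,\lambdatmp^{(1-s)/2})$, where $\Lambda_n$ is the three-term maximum forming the leading factor of $\Uns$. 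The \emph{only non-routine} step is then to verify that these three terms are calibrated exactly so that $\Lambda_n\,\lambdatmp^{(1-s)/2}\ge c_s'\,M^{s/2+(1-1/p)(1-s)}\log(M)/n$ for every $\lambdatmp>0$ under the hypothesis $\log(M)/\sqrt n\le1$ (the first term dominating for large $\lambdatmp$ and the third for small $\lambdatmp$), which gives $U_m\log(M)/n\le c_s''$ with $c_s''$ depending only on $s,c,C_1$.

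With $\EE[Z_m]\le C_s$, $\sqrt{B_m\log(M)/n}\le1$ and $U_m\log(M)/n\le c_s''$ in hand, Proposition~\ref{prop:TalagrandConcent} gives $P\bigl(Z_m\ge K(C_s+\sqrt{B_m t/n}+U_m t/n)\bigr)\le e^{-t}$ for every $t>0$ and every $m$; a union bound over $m=1,\dots,M$ and the substitution $t=u+\log(M)$ (the case $M=1$ being trivial) then produce $P\bigl(\max_m Z_m\ge K'(1+\sqrt u+u)\bigr)\le e^{-u}$ for all $u>0$, using $\sqrt{B_m(u+\log M)/n}\le1+\sqrt u$ and $U_m(u+\log M)/n\le c_s''(1+u)$. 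Integrating this tail bound over $u\in[0,\infty)$ yields $\EE[\max_m Z_m]\le\tilde C_s$ with $\tilde C_s$ depending only on $s,c,C_1$, which is the claim. I expect the main obstacle to be precisely the lower estimate on $\Lambda_n\lambdatmp^{(1-s)/2}$ controlling $U_m\log(M)/n$: it reduces to a short case analysis among the exponents of $M$, $n$ and $\lambdatmp$, and it is exactly this requirement (together with the analogous one behind Lemma~\ref{lemm:uniformratiobound}) that accounts for the elaborate form of $\zeta_n$ and of $\Uns$.
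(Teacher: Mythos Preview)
Your overall strategy coincides with the paper's: apply Talagrand's inequality to each ratio process $Z_m$, take a union bound over $m$, shift $t\mapsto t+\log M$, and integrate the resulting tail. Your estimate $B_m\le n/\log(M)$ is exactly the paper's \eqref{eq:ratioLtwonormBound}. The only divergence is in the control of $U_m$, and there your ``non-routine'' step does not go through as written.

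With (A5) read literally, the $\lambdatmp$-exponent of each of the three terms composing $\Lambda_n\lambdatmp^{(1-s)/2}$ is respectively $(1-s)/2$, $(1-2s)/2$, and $(1-3s)/(2(1+s))$; for $0<s<1/3$ all three are strictly positive, so $\Lambda_n\lambdatmp^{(1-s)/2}\to 0$ as $\lambdatmp\to 0$ and the claimed lower bound cannot hold uniformly in $\lambdatmp$. The root cause is a typo in the stated (A5): the paper's proof (see \eqref{eq:ratioinfnormBound} and its appeal to \eqref{eq:sdecomptwo}, and compare the Gagliardo--Nirenberg form for Sobolev spaces with $s=d/(2m)$) actually uses
\[
\|f_m\|_\infty\;\le\;C_1\,\|f_m\|_{\LPi}^{\,1-s}\,\|f_m\hnorm{m}^{\,s},
\]
i.e.\ the exponents $s$ and $1-s$ are swapped relative to the displayed (A5). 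If you redo your weighted AM--GM with weights $(1-s,s)$ instead of $(s,1-s)$, the prefactor that emerges is \emph{exactly} the second term of $\Lambda_n$ --- this is precisely \eqref{eq:sdecomptwo} --- and one obtains directly
\[
U_m\;\le\;C_1\,\frac{\|f_m\|_{\LPi}^{1-s}\|f_m\hnorm{m}^{s}}{\Uns(f_m)}\;\le\;C_1\sqrt{n},
\]
with no case analysis whatsoever. Then $U_m(t+\log M)/n\le C_1 t/\sqrt n + C_1\log(M)/\sqrt n\le C_1(1+t/\sqrt n)$ by the hypothesis $\log(M)/\sqrt n\le 1$, and the remainder of your argument is correct verbatim. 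This is exactly the paper's route in \eqref{eq:ratioinfnormBound}; the elaborate form of $\Uns$ is calibrated so that the second and third terms of $\Lambda_n$ absorb the prefactors coming from Young's inequality in \eqref{eq:sdecomptwo} and its companion, not so that your displayed lower bound on $\Lambda_n\lambdatmp^{(1-s)/2}$ holds.
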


\begin{proof} [Proof of Lemma \ref{lemm:basicuniformlemm}]
First notice that the $\LPi$-norm and the $\infty$-norm of $\frac{\sigma_i f_m(x_i)}{\Uns(f_m)}$ can be evaluated by  
\begin{align}
&\left\| \frac{\sigma_i f_m(x_i)}{\Uns(f_m)} \right\|_{\LPi} = \frac{\left\|f_m\right\|_{\LPi}}{\Uns(f_m)} 
\leq
\left(\sqrt{\frac{\log(M)}{n}}\vee \frac{\lambdatmp^{-\frac{s}{2}} M^{\frac{-s}{2} + s(1-\frac{1}{p})}}{\sqrt{n}}\right)^{-1}
\leq \sqrt{\frac{n}{\log(M)}},
\label{eq:ratioLtwonormBound}
\\
&
\left\| \frac{\sigma_i f_m(x_i)}{\Uns(f_m)} \right\|_{\infty} =  \frac{\| f_m \|_{\infty}}{\Uns(f_m)} 
\leq  \frac{C_1 \|f_m \|_{\LPi}^{1-s} \|f_m \|_{\infty}^s }{\Uns(f_m)} \leq C_1 \sqrt{n}.
\label{eq:ratioinfnormBound}
\end{align}
The last inequality of \Eqref{eq:ratioinfnormBound} can be shown by using the relation \eqref{eq:sdecomptwo}.
Thus Talagrand's inequality implies 
\begin{align*}
&P\left(\max_m \sup_{f_m \in \calH_m} 
\frac{|\frac{1}{n}\sum_{i=1}^n \sigma_i f_m(x_i)|}{\Uns(f_m)}
\geq
K\left[C_s + \sqrt{\frac{t}{\log(M)}} + \frac{C_1 t}{\sqrt{n}}\right]
\right) \notag \\
\leq 
&
\sum_{m=1}^M 
P\left(\sup_{f_m \in \calH_m} 
\frac{|\frac{1}{n}\sum_{i=1}^n \sigma_i f_m(x_i)|}{\Uns(f_m)}
\geq
K\left[C_s + \sqrt{\frac{t}{\log(M)}} + \frac{C_1 t}{\sqrt{n}}\right]
\right) \notag 
\\ 
\leq
&
M e^{-t}.
\end{align*}
By setting $t \leftarrow t + \log(M)$, we obtain 
\begin{align*}
&P\left(\max_m \sup_{f_m \in \calH_m} 
\frac{|\frac{1}{n}\sum_{i=1}^n \sigma_i f_m(x_i)|}{\Uns(f_m)}
\geq
K\left[C_s + \sqrt{\frac{t + \log(M)}{\log(M)}} + \frac{C_1 (t + \log(M))}{\sqrt{n}}\right]
\right) \leq e^{-t}
\notag 
\end{align*}
for all $t\geq 0$.
Consequently the expectation of the $\max$-$\sup$ term can be bounded as 
\begin{align*}
&
\EE\left[\max_m 
\sup_{f_m \in \calH_m} 
\frac{|\frac{1}{n}\sum_{i=1}^n \sigma_i f_m(x_i)|}{\Uns(f_m)} \right] \notag \\
\leq 
&
K\left[C_s + 1 + \frac{C_1\log(M)}{\sqrt{n}}\right]
+ \int_{0}^{\infty} K\left[C_s + \sqrt{\frac{t + 1 + \log(M)}{\log(M)}} + \frac{C_1(t + 1 + \log(M))}{\sqrt{n}}\right]
e^{-t} \dd t \notag \\
\leq &
2K\left[C_s + 2 + \sqrt{\frac{\pi}{4 \log(M)}} + 2C_1 + \frac{C_1\log(M)}{\sqrt{n}}\right] \leq \tilde{C}_s, 
\notag 
\end{align*}
where we used $\sqrt{t + 1 + \log(M)} \leq \sqrt{t} + \sqrt{1 + \log(M)}$ and $\int_{0}^\infty \sqrt{t} e^{-t} \dd t = \sqrt{\frac{\pi}{4}}$, $\frac{\log(M)}{\sqrt{n}} \leq 1$, and 
$\tilde{C}_s = 2K[C_s + 2 +\sqrt{\frac{\pi}{4}}+ 3C_1]$.
\end{proof}

\begin{Lemma}
\label{lemm:uniformRatioBoundOnM}
If $\frac{\log(M)}{\sqrt{n}}\leq 1$, then under the Spectral Assumption (Assumption \ref{eq:specass}) and the Embedded Assumption (Assumption \ref{ass:linfbound}),  
the following holds 
\begin{align*}
&P\left(\max_m \sup_{f_m \in \calH_m} 
\frac{|\frac{1}{n}\sum_{i=1}^n \epsilon_i f_m(x_i)|}{\Uns(f_m)}
\geq K L \left[
2 \tilde{C}_s 
+
\sqrt{t} + \frac{C_1 t}{\sqrt{n}} \right]
\right) 
\leq
 e^{-t}. 
\end{align*}
\end{Lemma}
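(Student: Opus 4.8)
The plan is to derive Lemma~\ref{lemm:uniformRatioBoundOnM} from Lemma~\ref{lemm:basicuniformlemm} by a symmetrization/contraction argument followed by a second application of Talagrand's concentration inequality (Proposition~\ref{prop:TalagrandConcent}), now applied to the empirical process indexed by the genuine noise $\epsilon_i$ rather than the Rademacher variables $\sigma_i$. First I would fix $m$ and consider the normalized function class $\calG_m := \{ x \mapsto f_m(x)/\Uns(f_m) \mid f_m \in \calH_m \}$, which is separable with respect to $\infty$-norm by Assumption~\ref{ass:kernelbound} (separability of each $\calH_m$). The quantity of interest is $Z_m := \sup_{g \in \calG_m} |\frac1n \sum_{i=1}^n \epsilon_i g(x_i)|$; note that because the noise is centered ($\EE[\epsilon_i \mid X] = 0$ by Assumption~\ref{ass:truenoise}) we have $\EE[\epsilon_i g(x_i)] = 0$, so $Z_m$ is exactly the centered empirical process $\sup_g |\frac1n\sum_i \epsilon_i g(x_i) - \EE[\epsilon_i g(x_i)]|$ to which Proposition~\ref{prop:TalagrandConcent} applies with the roles of $x_i$ played by the pairs $(x_i,\epsilon_i)$.

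The key steps, in order, are: (i) bound $\EE[Z_m]$, the expected supremum, via symmetrization and the contraction principle. Symmetrizing introduces Rademacher variables and the pairs $(\epsilon_i x_i)$; since $|\epsilon_i| \le L$ by Assumption~\ref{ass:truenoise}, the map $u \mapsto \epsilon_i u$ is $L$-Lipschitz with $\epsilon_i \cdot 0 = 0$, so Ledoux--Talagrand contraction gives $\EE[Z_m] \le 2L\, \EE\big[\sup_{g \in \calG_m} |\frac1n \sum_i \sigma_i g(x_i)|\big]$, and after taking the max over $m$ and invoking Lemma~\ref{lemm:basicuniformlemm} this is at most $2L\tilde C_s$. (ii) Bound the variance and sup-norm constants appearing in Talagrand's inequality: for $g \in \calG_m$, $\EE[(\epsilon g)^2] \le L^2 \EE[g^2] = L^2 \|f_m\|_{\LPi}^2 / \Uns(f_m)^2 \le L^2 \cdot \frac{n}{\log M} \le L^2 n$ by the bound \eqref{eq:ratioLtwonormBound} (and $\log M \ge 1$), so one may take $B \le L^2 \cdot \frac{n}{\log M}$; similarly $\|\epsilon g\|_\infty \le L \|g\|_\infty \le L C_1 \sqrt n$ by \eqref{eq:ratioinfnormBound}, so $U \le L C_1 \sqrt n$. (iii) Apply Proposition~\ref{prop:TalagrandConcent} to each $Z_m$ with these $B,U$, obtaining with probability $\ge 1 - e^{-t}$ that $Z_m \le K[\EE[Z_m] + \sqrt{Bt/n} + Ut/n] \le K[\,2L\tilde C_s + L\sqrt{t/\log M} + L C_1 t/\sqrt n\,] \le KL[\,2\tilde C_s + \sqrt t + C_1 t/\sqrt n\,]$ after bounding $\sqrt{t/\log M} \le \sqrt t$ since $\log M \ge 1$. (iv) Union-bound over $m=1,\dots,M$ after the substitution $t \leftarrow t + \log M$ exactly as in the proof of Lemma~\ref{lemm:basicuniformlemm}; the $\log M$ absorbed into the $\sqrt t$ term is handled by $\sqrt{t + \log M} \le \sqrt t + \sqrt{\log M}$ and $\sqrt{\log M} \le \tilde C_s$ (or is folded into the constant), and the $\log M$ added to the linear term contributes $C_1 \log M/\sqrt n \le C_1$ by hypothesis, again absorbed, yielding the stated bound with the factor $2\tilde C_s$.

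The main obstacle is bookkeeping the constants so that the final bound matches exactly the clean form $KL[2\tilde C_s + \sqrt t + C_1 t/\sqrt n]$ — in particular, after the shift $t \leftarrow t + \log M$ one must verify that the extra $\log M$ terms (from both $\sqrt{t + \log M}$ and $C_1(t+\log M)/\sqrt n$) are genuinely dominated by the pre-existing $\tilde C_s$ under the hypothesis $\log M/\sqrt n \le 1$, which is why the coefficient in front of $\tilde C_s$ is doubled rather than left at $1$; this is precisely the same manoeuvre used at the end of the proof of Lemma~\ref{lemm:basicuniformlemm} and requires no new idea. A secondary technical point worth stating carefully is the application of Talagrand's inequality: Proposition~\ref{prop:TalagrandConcent} is stated for a function class on $\calX$, so one applies it on the enlarged space $\calX \times [-L,L]$ with the function $(x,\epsilon) \mapsto \epsilon g(x)$, using that conditionally on $X$ the noise is bounded by $L$, and the separability of $\calG_m$ in $\infty$-norm transfers to this enlarged class.
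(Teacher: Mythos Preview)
Your overall strategy is the same as the paper's: control the expectation of the $\epsilon$-process by symmetrization and contraction (reducing to the $\sigma$-process of Lemma~\ref{lemm:basicuniformlemm}), then apply Talagrand's inequality using the moment bounds \eqref{eq:ratioLtwonormBound} and \eqref{eq:ratioinfnormBound} scaled by $L$. The paper's two-line proof does exactly this.

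Where you diverge is in the concentration step, and there you introduce both an unnecessary complication and a gap. You apply Talagrand to each $Z_m$ separately, union-bound over $m$, and shift $t \leftarrow t + \log M$. But having already replaced $\sqrt{t/\log M}$ by $\sqrt t$ in step~(iii), step~(iv) produces $\sqrt{t + \log M} \le \sqrt t + \sqrt{\log M}$, and your claim that $\sqrt{\log M} \le \tilde C_s$ is false: $\tilde C_s$ depends only on $s,c,C_1$, whereas $\sqrt{\log M}$ is unbounded in $M$. (A correct variant keeps $\sqrt{t/\log M}$ through the shift, giving $\sqrt{(t+\log M)/\log M} \le 1 + \sqrt t$ and hence the additive constant $2\tilde C_s + 1 + C_1$, not $2\tilde C_s$.) The paper avoids all of this by applying Talagrand \emph{once} to $\max_m \sup_{f_m \in \calH_m}(\cdot)$, viewed as a single supremum over the union class $\bigcup_m \calG_m$: Lemma~\ref{lemm:basicuniformlemm} already bounds the expectation of that combined supremum by $2L\tilde C_s$, and the $B,U$ bounds from \eqref{eq:ratioLtwonormBound}, \eqref{eq:ratioinfnormBound} hold uniformly in $m$, so the stated inequality follows immediately with no union bound and no $t$-shift. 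Incidentally, your interpretation that the factor $2$ in $2\tilde C_s$ is there to absorb the extra $\log M$ terms is not right---that $2$ comes from symmetrization.
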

\begin{proof} [Proof of Lemma \ref{lemm:uniformRatioBoundOnM}]
By the contraction inequality \cite[Theorem 4.12]{Book:Ledoux+Talagrand:1991} and Lemma \ref{lemm:basicuniformlemm},
we have 
\begin{align*}
&
\EE\left[\max_m \sup_{f_m \in \calH_m} 
\frac{|\frac{1}{n}\sum_{i=1}^n \epsilon_i f_m(x_i)|}{\Uns(f_m)}\right] 
\leq 
2 \EE\left[\max_m \sup_{f_m \in \calH_m} 
\frac{|\frac{1}{n}\sum_{i=1}^n \sigma_i \epsilon_i f_m(x_i)|}{\Uns(f_m)}\right] \leq 
2 L \tilde{C}_s. 
\end{align*}
Using this and \Eqref{eq:ratioLtwonormBound} and \Eqref{eq:ratioinfnormBound},
Talgrand's inequality gives the assertion. 
\end{proof}

\begin{Theorem}
\label{th:SquareBound}
Let $\phi_s'=K[2C_1 \tilde{C}_s   +  C_1 + C_1^2]$. 
Then, 
if $\frac{\log(M)}{\sqrt{n}} \leq 1$,
we have for all $t\geq 0$
\begin{align*}
\left| \textstyle \left\|\sum_{m=1}^M f_m \right\|_n^2 - \left\|\sum_{m=1}^M f_m \right\|_{\LPi}^2 \right|  \leq 
\phi_s'
\sqrt{n} \left(
\sum_{m=1}^M \Uns(f_m) \right)^2  \eta(t),~~
\end{align*}
for all  $f_m \in \calH_m~(m=1,\dots,M)$
with probability $1- \exp( - t )$. 
\end{Theorem}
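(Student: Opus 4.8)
The plan is to recognize the left-hand side as a centered empirical process, rescale it so that the Rademacher bounds of the previous lemmas become applicable, and then close with a single application of Talagrand's inequality. Write $g := \sum_{m=1}^M f_m$, so that $\|g\|_n^2 - \|g\|_{\LPi}^2 = \frac1n\sum_{i=1}^n g(x_i)^2 - \EE[g(X)^2]$ is the empirical process evaluated at the function $h := g^2$. Since the numerator and the denominator below are both homogeneous of degree two in $(f_m)_{m=1}^M$, the ratio is scale-invariant and it suffices to bound
\[
Z := \sup_{(f_m)_m}\frac{\bigl|\,\|g\|_n^2 - \|g\|_{\LPi}^2\,\bigr|}{\bigl(\sum_{m=1}^M \Uns(f_m)\bigr)^2};
\]
indeed $Z \le \phi_s'\sqrt n\,\eta(t)$ gives $\bigl|\,\|g\|_n^2-\|g\|_{\LPi}^2\,\bigr|\le \phi_s'\sqrt n\,\eta(t)\bigl(\sum_m\Uns(f_m)\bigr)^2$ for every $(f_m)_m$ (the case $g\equiv 0$ being trivial, and terms with $f_m=0$ dropping out). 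The crucial point is that $\sum_m\Uns(f_m)$ depends on $(f_m)_m$ only through the \emph{population} norms $\|f_m\|_{\LPi}$ and $\|f_m\hnorm{m}$, hence is deterministic, so $Z$ is the supremum of an honest centered empirical process over a fixed function class $\calF := \{\,g^2/(\sum_m\Uns(f_m))^2\,\}$ (separable with respect to $\infty$-norm by Assumption (A2)), to which Proposition~\ref{prop:TalagrandConcent} applies directly.

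The first step is to bound $\EE[Z]$. Symmetrization gives $\EE[Z]\le 2\,\EE\sup_{(f_m)_m}\bigl|\frac1n\sum_i\sigma_i\tilde{g}(x_i)^2\bigr|$ with $\tilde{g} := g/\sum_m\Uns(f_m)$. By \Eqref{eq:ratioinfnormBound} — a consequence of the Embedded Assumption (A5) via the Young-type estimate \eqref{eq:sdecomptwo} — one has $\|\tilde{g}\|_\infty\le \sum_m\|f_m\|_\infty/\sum_m\Uns(f_m)\le C_1\sqrt n$, so the map $u\mapsto u^2$ restricted to the range of $\tilde{g}$ is $2C_1\sqrt n$-Lipschitz and vanishes at $0$. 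The contraction inequality (\cite[Theorem~4.12]{Book:Ledoux+Talagrand:1991}) then bounds the Rademacher average of $\tilde{g}^2$ by $4C_1\sqrt n$ times that of $\tilde{g}$, and using $\bigl|\frac1n\sum_i\sigma_i\sum_m f_m(x_i)\bigr|\le\sum_m\bigl|\frac1n\sum_i\sigma_i f_m(x_i)\bigr|$ together with $\frac{\sum_m a_m}{\sum_m b_m}\le\max_m\frac{a_m}{b_m}$ (for $b_m>0$), the Rademacher average of $\tilde{g}$ is at most $\EE\max_m\sup_{f_m\in\calH_m}\frac{|\frac1n\sum_i\sigma_i f_m(x_i)|}{\Uns(f_m)}\le\tilde{C}_s$ by Lemma~\ref{lemm:basicuniformlemm} (which is where the standing assumption $\log(M)/\sqrt n\le 1$ is used). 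Hence $\EE[Z]$ is of order $C_1\tilde{C}_s\sqrt n$.

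The second step is to collect the variance and sup bounds for Talagrand's inequality applied to $\calF$. Each $h\in\calF$ is nonnegative with $\|h\|_\infty = \|\tilde{g}\|_\infty^2\le C_1^2 n$, so $U := \sup_{h\in\calF}\|h\|_\infty\le C_1^2 n$; and from $\EE[(h-\EE h)^2]\le\EE[h^2]\le\|h\|_\infty\EE[h]$ we obtain $B := \sup_{h\in\calF}\EE[(h-\EE h)^2]\le U\sup_{h\in\calF}\EE[h]\le C_1^4 n^2$, where the crude estimate $\sup_{h\in\calF}\EE[h]=\sup\|\tilde{g}\|_{\LPi}^2\le C_1^2 n$ is used to avoid any $(\log M)^{-1}$. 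Proposition~\ref{prop:TalagrandConcent} then yields, with probability at least $1-e^{-t}$,
\[
Z \le K\Bigl[\EE[Z] + \sqrt{Bt/n} + Ut/n\Bigr] \le K\bigl[\,c\,C_1\tilde{C}_s\sqrt n + C_1^2\sqrt{nt} + C_1^2 t\,\bigr],
\]
and since $\sqrt n$, $\sqrt{nt}$, and $t$ are all at most $\sqrt n\,\eta(t)$ (because $1,\sqrt t,t/\sqrt n\le\eta(t)$), the right-hand side is $\le\phi_s'\sqrt n\,\eta(t)$ after collecting the numerical constants, which yields a constant of the advertised form $\phi_s'=K[2C_1\tilde{C}_s+C_1+C_1^2]$ (the precise numerals depending on the versions of symmetrization and contraction invoked). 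Rescaling by $\bigl(\sum_m\Uns(f_m)\bigr)^2$ gives the theorem, uniformly in $(f_m)_m$.

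I expect the contraction step to be the crux of the argument: the squaring map is only \emph{locally} Lipschitz, so one must first pass to the normalized functions $\tilde{g}$ and then exploit the uniform bound $\|\tilde{g}\|_\infty\le C_1\sqrt n$ — which ultimately rests on the Embedded Assumption (A5) — to keep the Lipschitz constant uniform over the infinite-dimensional class. A secondary subtlety is that the rescaling must be by population rather than empirical norms, so that $\calF$ is a fixed class and symmetrization and Talagrand's inequality are legitimate; and that the variance factor should be bounded in the crude form $B\le U\sup_{h\in\calF}\EE[h]$ rather than by the sharper $B\le\|\tilde{g}\|_\infty^2\|g\|_{\LPi}^2/(\sum_m\Uns(f_m))^2$, since the latter would carry a $(\log M)^{-1}$ that breaks the clean $\sqrt n\,\eta(t)$ scaling.
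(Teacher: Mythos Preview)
Your argument follows the paper's proof essentially step for step: symmetrize, apply contraction via the uniform bound $\|\tilde{g}\|_\infty\le C_1\sqrt{n}$ from \Eqref{eq:ratioinfnormBound}, reduce the linear Rademacher term to Lemma~\ref{lemm:basicuniformlemm} through $\frac{\sum_m a_m}{\sum_m b_m}\le\max_m\frac{a_m}{b_m}$, and finish with Talagrand's inequality. The one divergence is in the variance bound, and there your caution is misplaced: the paper \emph{does} use the sharper estimate you deliberately avoid, obtaining $B\le C_1^2 n\cdot(\sum_m\|f_m\|_{\LPi})^2/(\sum_m\Uns(f_m))^2\le C_1^2 n^2/\log(M)$ via \Eqref{eq:ratioLtwonormBound} and then simply discarding the factor $(\log M)^{-1}$ as $\le 1$ rather than carrying it --- this is exactly what produces the coefficient $C_1$ (not $C_1^2$) on the $\sqrt{t}$ term and hence the stated constant $\phi_s'=K[2C_1\tilde{C}_s+C_1+C_1^2]$, which your cruder bound $B\le U\cdot\sup_h\|h\|_\infty\le C_1^4 n^2$ does not quite recover.
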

\begin{proof} [Proof of Theorem \ref{th:SquareBound}]
\begin{align}
&\EE\left[\sup_{f_m \in \calH_m} \frac{\left| \textstyle \left\|\sum_{m=1}^M f_m \right\|_n^2 - \left\|\sum_{m=1}^M f_m \right\|_{\LPi}^2 \right| }{
\left(\sum_{m=1}^M \Uns(f_m) \right)^2}  \right] \notag \\
\leq &  
2 \EE\left[ \sup_{f_m \in \calH_m} \frac{\textstyle  \left| \frac{1}{n} \sum_{i=1}^n \sigma_i ( \sum_{m=1}^M f_m(x_i))^2   \right| }{
\left(\sum_{m=1}^M \Uns(f_m) \right)^2}  \right] 
\notag \\
\leq &
\sup_{f_m \in \calH_m} 
\frac{\textstyle  \left\| \sum_{m=1}^M f_m \right\|_{\infty} }{
\sum_{m=1}^M \Uns(f_m) } \times
2 \EE\left[ 
\sup_{f_m \in \calH_m} 
\frac{\textstyle  \left| \frac{1}{n} \sum_{i=1}^n \sigma_i ( \sum_{m=1}^M f_m(x_i))   \right| }{
\sum_{m=1}^M  \Uns(f_m) }  \right],
\label{eq:squareUpperBoundtmp}
\end{align} 
where we used the contraction inequality in the last line \cite[Theorem 4.12]{Book:Ledoux+Talagrand:1991}.
Thus using \Eqref{eq:ratioinfnormBound}, the RHS of the inequality \eqref{eq:squareUpperBoundtmp} can be bounded as 
\begin{align*}
&2C_1 \sqrt{n} \EE\left[ \sup_{f_m \in \calH_m} 
\frac{\textstyle  \left| \frac{1}{n} \sum_{i=1}^n \sigma_i ( \sum_{m=1}^M f_m(x_i))   \right| }{
\sum_{m=1}^M \Uns(f_m) }  \right] \\
\leq & 
2 C_1 \sqrt{n} \EE\left[ \sup_{f_m \in \calH_m} \max_m 
\frac{\textstyle  \left| \frac{1}{n} \sum_{i=1}^n \sigma_i f_m(x_i)   \right| }{
\Uns(f_m)  }  \right],
\end{align*}
where we used the relation $\frac{\sum_{m} a_m}{\sum_m b_m} \leq \max_m(\frac{a_m}{b_m})$
for all $a_m \geq 0$ and $b_m \geq 0$ with a convention $\frac{0}{0}=0$.
By Lemma \ref{lemm:basicuniformlemm}, 
the right hand side is upper bounded by $2 C_1 \sqrt{n} \tilde{C}_s$.
Here we again apply Talagrand's concentration inequality, then we have
\begin{align}
&P\left(  \sup_{f_m \in \calH_m} \frac{ \left| \textstyle \left\|\sum_{m=1}^M f_m \right\|_n^2 - \left\|\sum_{m=1}^M f_m \right\|_{\LPi}^2 \right| }
{\left(\sum_{m=1}^M \Uns(f_m) \right)^2} 
\geq K\left[2C_1 \tilde{C}_s \sqrt{n}  + \sqrt{tn} C_1  + C_1^2 t   \right] \right) \leq e^{-t},
\end{align}
where we substituted the following upper bounds of $B$ and $U$.
\begin{align*}
B\leq &\sup_{f_m \in \calH_m} \EE \left[ \left( \frac{(\sum_{m=1}^M f_m)^2 }
{\left(\sum_{m=1}^M \Uns(f_m)\right)^2}\right)^2 \right] \\
\leq & 
\sup_{f_m \in \calH_m} \EE \left[ \frac{(\sum_{m=1}^M f_m)^2 } 
{\left( \sum_{m=1}^M  \Uns(f_m) \right)^2} 
\frac{(\|\sum_{m=1}^M f_m\|_{\infty})^2 } 
{ \left( \sum_{m=1}^M \Uns(f_m)\right)^2}  \right] \\
\leq & 
\sup_{f_m \in \calH_m}  \frac{\left(\sum_{m=1}^M \|f_m\|_{\LPi}\right)^2 } 
{\left( \sum_{m=1}^M  \Uns(f_m) \right)^2} 
\frac{(\sum_{m=1}^M C_1 \sqrt{n} \Uns(f_m))^2 } 
{ \left( \sum_{m=1}^M \Uns(f_m)\right)^2}  \\
\leq & \frac{C_1^2 n^2}{\log(M)} \leq C_1^2 n^2,
\end{align*}
where in the second inequality we used the relation $$\EE[(\sum_{m=1}^M f_m)^2] = \EE[ \sum_{m,m'=1}^M f_m f_{m'}] \leq \sum_{m,m'=1}^M \|f_m\|_{\LPi} \|f_{m'}\|_{\LPi} = (\sum_{m=1}^M \|f_m\|_{\LPi})^2$$
and in the third and forth inequality we used  \Eqref{eq:ratioinfnormBound} and 
\Eqref{eq:ratioLtwonormBound} respectively.
Here we again use \Eqref{eq:ratioLtwonormBound} to obtain 
\begin{align*}
U = &\sup_{f_m \in \calH_m} \left\| \frac{(\sum_{m=1}^M f_m)^2 }
{\left(\sum_{m=1}^M \Uns(f_m) \right)^2}\right\|_{\infty} 
\leq 
C_1^2 n.
\end{align*}
Therefore for $t \leftarrow \sqrt{n}t$, 
the above inequality implies 
the following inequality 
\begin{align}
&\sup_{f_m \in \calH_m} \frac{ \left| \textstyle \left\|\sum_{m=1}^M f_m \right\|_n^2 - \left\|\sum_{m=1}^M f_m \right\|_{\LPi}^2 \right| }
{\left(\sum_{m=1}^M \Uns(f_m) \right)^2}  
\leq 
K\left[2C_1 \tilde{C}_s  + C_1 + C_1^2  \right]\sqrt{n}\max(1,\sqrt{t},t/\sqrt{n}).
\end{align} 
with probability $1 - \exp( - t)$.
Remind $\phi_s' = K\left[2C_1 \tilde{C}_s   +  C_1 + C_1^2  \right]$, then we obtain the assertion. 
\end{proof}

Now we define 
\begin{align*}
\phi_s := \max \left( KL \left[2 \tilde{C}_s   +  1 + C_1  \right] ,K\left[2C_1 \tilde{C}_s   +  C_1 + C_1^2  \right] \right).
\end{align*}
We define events $\scrE_1(t)$ and $\scrE_2$ as 
\begin{align*}
&
\scrE_1(t) = \left\{ 
\left| \frac{1}{n}\sum_{i=1}^n \epsilon_i f_m(x_i)  \right|  \leq 
\phi_{s} 
\Uns(f_m)\eta(t) ,~\forall f_m \in \calH_m~(m=1,\dots,M)  \right\}, \\
&
\scrE_2(t') = \Bigg\{ 
\left| \textstyle \left\|\sum_{m=1}^M f_m \right\|_n^2 - \left\|\sum_{m=1}^M f_m \right\|_{\LPi}^2 \right| \leq 
\phi_{s}
\sqrt{n}
\left(\sum_{m=1}^M \Uns(f_m)\right)^2\eta(t'),\\
&~~~~~~~~~~~~~~~~\forall f_m \in \calH_m~(m=1,\dots,M) \Bigg\}.
\end{align*}
The following theorem immediately gives Theorem \ref{th:convergencerateofLpMKL}.
\begin{Theorem}
\label{th:convergencerateofLpMKLdet}
Let $\lambdatmp$ be an arbitrary positive real, and $\lambdaone$ satisfy $\lambdaone \geq \frac{1}{2}\lambdatmp$.
Then for all $n$ and $t'(>0)$ that satisfy  $\frac{\log(M)}{\sqrt{n}}\leq 1$ and 
$\phi_s \sqrt{n} \zeta_n^2 \eta(t')/\kminrho \leq \frac{1}{8}$,
we have 
\begin{align*}
& \|\fhat - \fstar\|_{\LPi}^2 
\leq 
\frac{ 8 }{ 3\kminrho} \eta(t)^2 \phi_s^2 \zeta_n^2
+ 
\frac{8}{3} \lambdaone \left( \sum_{m=1}^M \|\fstar_m \hnorm{m}^p \right)^{\frac{2}{p}},
\end{align*}
with probability $1- \exp(- t) - \exp(- t')$ for all $t \geq 1$.
\end{Theorem}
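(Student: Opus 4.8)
The plan is to run the standard localized basic-inequality argument: exploit the optimality of $\fhat$ for \eqref{eq:primalLp}, transfer empirical quantities to population ones by means of the two events $\scrE_1(t)$ and $\scrE_2(t')$, and then absorb every term that is quadratic in $\|\fhat-\fstar\|_{\LPi}$ or in $\Omega(\fhat):=(\sum_{m=1}^M\|\fhat_m\|_{\calH_m}^p)^{1/p}$ into the left-hand side, using the hypotheses $\lambdaone\ge\lambdatmp/2$ and $\phi_s\sqrt n\,\zeta_n^2\eta(t')/\kmin\le1/8$. Recall that the regularizer in \eqref{eq:primalLp} is $\lambdaone\Omega(f)^2$.

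First the basic inequality: since $\fhat$ minimizes the objective, comparing it with $\fstar$ and substituting $y_i=\fstar(x_i)+\epsilon_i$ (the $\epsilon_i^2$ terms cancel) gives
\[
\|\fhat-\fstar\|_n^2 + \lambdaone\Omega(\fhat)^2 \;\le\; \frac{2}{n}\sum_{i=1}^n\epsilon_i\bigl(\fhat(x_i)-\fstar(x_i)\bigr) + \lambdaone\Omega(\fstar)^2 .
\]
Writing $\fhat-\fstar=\sum_m(\fhat_m-\fstar_m)$ and working on $\scrE_1(t)$, the cross term is at most $2\phi_s\eta(t)\sum_{m=1}^M\Uns(\fhat_m-\fstar_m)$. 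The key observation is that the factor of $\Uns(f_m)$ not depending on $f_m$ equals $\zeta_n/2$, so that
\[
\sum_{m=1}^M\Uns(f_m) \;\le\; \frac{\zeta_n}{2}\Bigl( \bigl(\textstyle\sum_m\|f_m\|_{\LPi}^2\bigr)^{1/2} + \lambdatmp^{1/2}\bigl(\textstyle\sum_m\|f_m\|_{\calH_m}^p\bigr)^{1/p} \Bigr),
\]
the two pieces following from $\sum_m\|f_m\|_{\LPi}\le\sqrt M(\sum_m\|f_m\|_{\LPi}^2)^{1/2}$ (Cauchy--Schwarz) and $\sum_m\|f_m\|_{\calH_m}\le M^{1-1/p}(\sum_m\|f_m\|_{\calH_m}^p)^{1/p}$ (the power-mean inequality for $p\ge1$ — this is exactly where the $M^{1-1/p}$ factor is born). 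Then incoherence (A4) bounds $(\sum_m\|f_m\|_{\LPi}^2)^{1/2}\le\kmin^{-1/2}\|\sum_m f_m\|_{\LPi}$. Applying this with $f_m=\fhat_m-\fstar_m$ and using $\Omega(\fhat-\fstar)\le\Omega(\fhat)+\Omega(\fstar)$, the basic inequality becomes
\[
\|\fhat-\fstar\|_n^2 + \lambdaone\Omega(\fhat)^2 \le \phi_s\eta(t)\zeta_n\Bigl(\kmin^{-1/2}\|\fhat-\fstar\|_{\LPi} + \lambdatmp^{1/2}\bigl(\Omega(\fhat)+\Omega(\fstar)\bigr)\Bigr) + \lambdaone\Omega(\fstar)^2 .
\]

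Next, on $\scrE_2(t')$ we replace $\|\fhat-\fstar\|_n^2$ by $\|\fhat-\fstar\|_{\LPi}^2$ at the cost of $\phi_s\sqrt n\,\eta(t')\bigl(\sum_m\Uns(\fhat_m-\fstar_m)\bigr)^2$; bounding this via the display above and $(a+b)^2\le2a^2+2b^2$, it is at most $\tfrac12\phi_s\sqrt n\,\eta(t')\zeta_n^2\bigl(\kmin^{-1}\|\fhat-\fstar\|_{\LPi}^2 + \lambdatmp(\Omega(\fhat)+\Omega(\fstar))^2\bigr)$. By the hypothesis $\phi_s\sqrt n\,\zeta_n^2\eta(t')/\kmin\le1/8$ together with $\kmin\le1$ and $\lambdatmp\le2\lambdaone$, the $\|\fhat-\fstar\|_{\LPi}^2$ piece carries a coefficient $\le1/16$ and is moved to the left, while the $\lambdatmp(\cdot)^2$ piece is $\le\tfrac14\lambdaone(\Omega(\fhat)^2+\Omega(\fstar)^2)$. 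It then remains to dispose of the two cross terms: Young's inequality on $\phi_s\eta(t)\zeta_n\kmin^{-1/2}\|\fhat-\fstar\|_{\LPi}$ against $\|\fhat-\fstar\|_{\LPi}^2$ produces a term proportional to $\phi_s^2\zeta_n^2\eta(t)^2/\kmin$, and Young's inequality on $\phi_s\eta(t)\zeta_n\lambdatmp^{1/2}\Omega(\fhat)$ against $\lambdaone\Omega(\fhat)^2$ (again using $\lambdatmp\le2\lambdaone$) lets all $\Omega(\fhat)$ terms be absorbed into the $\lambdaone\Omega(\fhat)^2$ sitting on the left. Collecting the residuals gives a bound of the form $\|\fhat-\fstar\|_{\LPi}^2\le\tfrac{C}{\kmin}\phi_s^2\zeta_n^2\eta(t)^2+C'\lambdaone\Omega(\fstar)^2$, and a careful choice of the Young splits (so that the surviving left coefficient of $\|\fhat-\fstar\|_{\LPi}^2$ is $3/8$) brings the constants down to $C=C'=8/3$. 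For the probability, $\scrE_1(t)$ holds with probability $\ge1-e^{-t}$ by Lemma~\ref{lemm:uniformRatioBoundOnM} (since $\phi_s\ge KL[2\tilde{C}_s+1+C_1]$ and $2\tilde{C}_s+\sqrt t+C_1t/\sqrt n\le(2\tilde{C}_s+1+C_1)\eta(t)$ as $\eta(t)=\max(1,\sqrt t,t/\sqrt n)$), and $\scrE_2(t')$ with probability $\ge1-e^{-t'}$ by Theorem~\ref{th:SquareBound} (since $\phi_s\ge\phi_s'$); a union bound finishes the claim.

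The only genuinely structural step is the passage through $\Uns$: this per-component functional is engineered precisely so that its $m$-sum re-assembles, via the power-mean inequality and (A4), into a clean expression in $\|\fhat-\fstar\|_{\LPi}$ and the $\ell_p$-mixed-norm of $\fhat-\fstar$, which is what makes the $M$- and $p$-dependence come out as stated. Everything else is bookkeeping, but the bookkeeping in the last paragraph is delicate: one must line up the Young-inequality splits and the smallness condition so that \emph{every} term quadratic in $\|\fhat-\fstar\|_{\LPi}$ or in $\Omega(\fhat)$ ends up on the left with a favorable coefficient while the residual terms involve only $\phi_s^2\zeta_n^2\eta(t)^2/\kmin$ and $\lambdaone\Omega(\fstar)^2$; the precise value $8/3$ is merely an artifact of one particular choice of splits, and this is the part that would require the most care.
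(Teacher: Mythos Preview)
Your proposal is correct and follows essentially the same route as the paper: start from the basic inequality, work on $\scrE_1(t)\cap\scrE_2(t')$, use the key bound $\sum_m\Uns(f_m)\le \zeta_n\big(\sum_m\|f_m\|_{\LPi}^2+\lambdatmp\,\Omega(f)^2\big)^{1/2}$ together with incoherence (A4), and absorb via Young and the smallness hypothesis. The only cosmetic difference is that the paper keeps $\Omega(\fhat-\fstar)$ intact throughout and applies the triangle-inequality split $\Omega(\fhat-\fstar)^2\le 2\Omega(\fhat)^2+2\Omega(\fstar)^2$ only at the very end, whereas you split $\Omega(\fhat-\fstar)\le\Omega(\fhat)+\Omega(\fstar)$ before applying Young; this reordering affects only the numerical constants (and indeed the paper's printed basic inequality drops the factor~$2$ on the cross term that you correctly retain), not the structure of the argument.
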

This gives Theorem \ref{th:convergencerateofLpMKL} by setting $\psi_s = 8\phi_s$. 

\begin{proof} [Proof of Theorem \ref{th:convergencerateofLpMKLdet}]
Since $y_i = \fstar(x_i) + \epsilon_i$, we have 
\begin{align*}
&\|\fhat - \fstar \|_{\LPi}^2 + \lambdaone \left( \sum_{m=1}^M \|\fhat_m\hnorm{m}^p \right)^{\frac{2}{p}} 
\notag \\
\leq 
&( \|\fhat - \fstar \|_{\LPi}^2 - \|\fhat - \fstar \|_{n}^2 ) + 
\frac{1}{n}\sum_{n=1}^n \sum_{m=1}^M\epsilon_i (\fhat_m(x_i) - \fstar_m(x_i)) + \lambdaone \left( \sum_{m=1}^M \|\fstar_m\hnorm{m}^p \right)^{\frac{2}{p}}.
\end{align*}
Here on the event $\scrE_2(t')$, the above inequality gives 
\begin{align}
&\|\fhat - \fstar \|_{\LPi}^2 + \lambdaone \left( \sum_{m=1}^M \|\fhat_m\hnorm{m}^p \right)^{\frac{2}{p}} 
\notag \\
\leq 
&  \phi_s \sqrt{n} \left(\sum_{m=1}^M \Uns(\fhat_m - \fstar_m) \right)^2 \eta(t') \! + \!
\frac{1}{n}\sum_{n=1}^n \sum_{m=1}^M\epsilon_i (\fhat_m(x_i) - \fstar_m(x_i)) + \lambdaone \left( \sum_{m=1}^M \|\fstar_m\hnorm{m}^p \right)^{\frac{2}{p}}.
\label{eq:basicineqLast}
\end{align}

Before we show the statements, we show two basic upper bounds of $\sum_{m=1}^M \Uns(f_m)$
required in the proof. 
First note that 
\begin{align*}
\left(\frac{\sum_{m=1}^M \|f_m\|_{\LPi}}{\sqrt{M}} + \frac{\lambdatmp^{\frac{1}{2}} \sum_{m=1}^M \|f_m\hnorm{m}}{M^{1-\frac{1}{p}}}\right) 
\leq &
\left(\sum_{m=1}^M \|f_m\|_{\LPi}^2\right)^{\frac{1}{2}}+ \lambdatmp^{\frac{1}{2}} \left(\sum_{m=1}^M \|f_m\hnorm{m}^p\right)^{\frac{1}{p}} 
\\
\leq &
2 
\left(\sum_{m=1}^M \|f_m\|_{\LPi}^2+ \lambdatmp \left(\sum_{m=1}^M \|f_m\hnorm{m}^p\right)^{\frac{2}{p}} \right)^{\frac{1}{2}}.
\end{align*}
Therefore we have 
\begin{align*}
\sum_{m=1}^M \Uns(f_m) \leq &
2 \left(\sqrt{\frac{M\log(M)}{n}}\vee \frac{\lambdatmp^{-\frac{s}{2}} M^{\frac{1-s}{2} + s(1-\frac{1}{p})}}{\sqrt{n}}
\vee 
\frac{M^{\frac{(1-s)^2}{2(1+s)} + (1-\frac{1}{p})\frac{s(3-s)}{1+s}} \lambdatmp^{-\frac{s(3-s)}{2(1+s)}}}{n^{\frac{1}{1+s}} }
\right) 
\times \notag \\
&\left(\sum_{m=1}^M \|f_m \|_{\LPi}^2 + \lambdatmp \left( \sum_{m=1}^M \|f_m \hnorm{m}^p \right)^{\frac{2}{p}}\right)^{\frac{1}{2}}.
\end{align*}
Reminding the definition of $\zeta_n$ (\Eqref{eq:defzetan}),
the above bound is equivalent to  
\begin{align}
\sum_{m=1}^M \Uns(f_m) \leq &
\zeta_n \left(\sum_{m=1}^M \|f_m \|_{\LPi}^2 + \lambdatmp \left( \sum_{m=1}^M \|f_m \hnorm{m}^p \right)^{\frac{2}{p}}\right)^{\frac{1}{2}}.
\label{eq:Unbounds}
\end{align}

\noindent{\it Step 1.}

By \Eqref{eq:Unbounds}, 
the first term on the RHS can be upper bounded as 
\begin{align*}
& \phi_s \sqrt{n} \left( \sum_{m=1}^M \Uns(\fhat_m - \fstar_m) \right)^2  \eta(t') 
\leq 
\phi_s \sqrt{n}\zeta_n^2 \eta(t') \left(\sum_{m=1}^M \|\fhat_m - \fstar_m \|_{\LPi}^2 + \lambdatmp \left( \sum_{m=1}^M \|\fhat_m - \fstar_m \hnorm{m}^p \right)^{\frac{2}{p}} \right)  \notag \\
\leq & \phi_s \sqrt{n}\zeta_n^2 \eta(t') \left(\frac{\|\fhat - \fstar \|_{\LPi}^2}{\kminrho} + 
\lambdatmp \left( \sum_{m=1}^M \|\fhat_m - \fstar_m \hnorm{m}^p \right)^{\frac{2}{p}} \right). 
\end{align*}
By assumption, we have $\phi_s \sqrt{n} \zeta_n^2 \eta(t')/\kminrho \leq \frac{1}{8}$.
Hence the RHS of the above inequality is bounded by 
\begin{align}
 \frac{1}{8} \left(\|\fhat - \fstar \|_{\LPi}^2 + \lambdatmp \left( \sum_{m=1}^M \|\fhat_m - \fstar_m \hnorm{m}^p \right)^{\frac{2}{p}} \right).
\label{eq:firsttermbound2}
\end{align}

~\\
\noindent{\it Step 2.}
On the event $\scrE_1(t)$, we have 
\begin{align}
&\phantom{\leq} \frac{1}{n}\sum_{n=1}^n \sum_{m=1}^M\epsilon_i (\fhat_m(x_i) - \fstar_m(x_i))  
\leq \sum_{m=1}^M \eta(t) \phi_s \Uns(\fhat_m - \fstar_m) \notag \\
& \leq \eta(t) \phi_s \zeta_n \left(\sum_{m=1}^M \|\fhat_m - \fstar_m \|_{\LPi}^2 + \lambdatmp \left( \sum_{m=1}^M \|\fhat_m - \fstar_m \hnorm{m}^p \right)^{\frac{2}{p}} \right)^{\frac{1}{2}} \notag \\
& \leq \frac{ 2 }{ \kminrho} \eta(t)^2 \phi_s^2 \zeta_n^2 + 
\frac{\kminrho}{8} \left( \sum_{m=1}^M \|\fhat_m - \fstar_m \|_{\LPi}^2 + \lambdatmp \left( \sum_{m=1}^M \|\fhat_m - \fstar_m \hnorm{m}^p \right)^{\frac{2}{p}} \right) \notag \\
& \leq \frac{ 2 }{ \kminrho} \eta(t)^2 \phi_s^2 \zeta_n^2 + 
\frac{1}{8} \left( \|\fhat - \fstar \|_{\LPi}^2 + \lambdatmp \left( \sum_{m=1}^M \|\fhat_m - \fstar_m \hnorm{m}^p \right)^{\frac{2}{p}} \right). \label{eq:secondtermbound} 
\end{align}

~\\
\noindent{\it Step 5.}

Substituting the inequalities 
\eqref{eq:firsttermbound2} and  \eqref{eq:secondtermbound} 
to \Eqref{eq:basicineqLast}, we obtain
\begin{align*}
&\frac{3}{4} \|\fhat - \fstar\|_{\LPi}^2 
+ \lambdaone \left( \sum_{m=1}^M \|\fhat_m \hnorm{m}^p \right)^{\frac{2}{p}} \\
\leq &
\frac{ 2 }{ \kminrho} \eta(t)^2 \phi_s^2 \zeta_n^2
+ 
\frac{\lambdatmp}{4}  \left( \sum_{m=1}^M \|\fhat_m - \fstar_m \hnorm{m}^p \right)^{\frac{2}{p}} + \lambdaone \left( \sum_{m=1}^M \|\fstar_m \hnorm{m}^p \right)^{\frac{2}{p}}.
\end{align*}
Now the second term of the RHS can be bounded as 
\begin{align*}
&\left( \sum_{m=1}^M \|\fhat_m - \fstar_m \hnorm{m}^p \right)^{\frac{2}{p}} 
\leq 
\left( \sum_{m=1}^M (\|\fhat_m \hnorm{m} + \| \fstar_m \hnorm{m})^p \right)^{\frac{2}{p}} 
\leq 
2 \left(  \sum_{m=1}^M \|\fhat_m \hnorm{m}^p \right)^{\frac{2}{p}} + 2 \left( \sum_{m=1}^M \| \fstar_m \hnorm{m}^p \right)^{\frac{2}{p}}. 
\end{align*}
Therefore we have 
\begin{align*}
&\frac{3}{4} \|\fhat - \fstar\|_{\LPi}^2 
\leq 
\frac{ 2 }{ \kminrho} \eta(t)^2 \phi_s^2 \zeta_n^2
+ 
2 \lambdaone \left( \sum_{m=1}^M \|\fstar_m \hnorm{m}^p \right)^{\frac{2}{p}}.
\end{align*}
This gives the assertion. 
\end{proof}

\section{Proof of Theorem \ref{th:LowerBounds} (minimax learning rate)}
\label{sec:proofOfMinimax}
\begin{proof} [Proof of Theorem \ref{th:LowerBounds}]
The proof utilizes the techniques developed by \citep{NIPS:Raskutti+Martin:2009,arXiv:Raskutti+Martin:2010} that applied the information theoretic technique developed by 
\citep{AS:Yang+Barron:99} to the MKL settings.
The $\delta$-packing number $Q(\delta,\calH,\LPi)$ of a function class $\calH$ is the largest number of functions $\{f_1, \dots, f_Q \} \subseteq \calH$
such that $\|f_i - f_j\|_{\LPi} \geq \delta$ for all $i\neq j$.
To simplify the notation, we write $\calF := \calHlp(R)$, $N(\varepsilon,\calH) := N(\varepsilon,\calH,\LPi)$ and $Q(\varepsilon,\calH) := Q(\varepsilon,\calH,\LPi)$.
It can be easily shown that $Q(2\varepsilon,\calF) \leq N(2\varepsilon,\calF) \leq Q(\varepsilon,\calF)$.

We utilize the following inequality given by Lemma 3 of \citet{NIPS:Raskutti+Martin:2009}:
\[
\min_{\hat{f}} \max_{f^* \in \calHlp(R_p)} \EE \|\hat{f} - f^* \|_{\LPi}^2 \geq 
\frac{\delta_n^2}{4}\left(1 - \frac{\log N(\varepsilon_n,\calF) + n \varepsilon_n^2/2 + \log 2}{\log Q(\delta_n,\calF)} \right).
\]

First we show the assertion for $p=\infty$. In this situation, there is a constant $C$ that depends only $s$ such that 
\begin{align*}
\log Q(\delta,\calF) \geq C M \log Q(\delta/\sqrt{M},\repH(R)),~~~
\log N(\varepsilon,\calF) \leq M \log N(\varepsilon/\sqrt{M},\repH(R)),
\end{align*}
(this is shown in Lemma 5 of \citet{arXiv:Raskutti+Martin:2010}, but we give the proof in Lemma \ref{lemm:QlogMbound} for completeness).
Using this expression, the minimax-learning rate is bounded as 
\[
\min_{\hat{f}} \max_{f^* \in \calHlp(R_p)} \EE \|\hat{f} - f^* \|_{\LPi}^2 \geq 
\frac{\delta_n^2}{4}\left(1 - \frac{ M \log N(\varepsilon_n/\sqrt{M},\repH(R)) + n \varepsilon_n^2/2 + \log 2}{M \log Q(\delta_n/\sqrt{M},\repH(R))} \right).
\]
Here we choose $\varepsilon_n$ and $\delta_n$ to satisfy the following relations: 
\begin{align}
&\frac{n}{2\sigma^2} \varepsilon_n^2 \leq M \log N\left(\varepsilon_n/\sqrt{M},\repH(R)\right), \label{eq:epssqMN} \\
&4 \log N\left(\delta_n/\sqrt{M},\repH(R)\right) \leq C \log Q\left(\delta_n/\sqrt{M},\repH(R)\right). \label{eq:NleqM}
\end{align}
With $\varepsilon_n$ and $\delta_n$ that satisfy the above relations \eqref{eq:epssqMN} and \eqref{eq:NleqM}, we have 
\begin{align}
\min_{\hat{f}} \max_{f^* \in \calHlp(R_p)} \EE \|\hat{f} - f^* \|_{\LPi}^2 \geq \frac{\delta_n^2}{16}.
\label{eq:basicMinimaxBound}
\end{align}
The relation \eqref{eq:epssqMN} can be rewritten as 
$$
\frac{n}{2\sigma^2} \varepsilon_n^2 \leq C M \left(\frac{\varepsilon_n}{R \sqrt{M}} \right)^{-2s}.
$$
It is sufficient to impose
$$
\varepsilon_n^2 \leq C n^{-\frac{1}{1+s}} M R^{\frac{2s}{1+s}},
$$
with a constant $C$.
The relation \eqref{eq:NleqM} can be satisfied by taking $\delta_n = c\varepsilon_n$ with an appropriately chosen constant $c$.
Thus \Eqref{eq:basicMinimaxBound} gives 
\begin{align}
\min_{\hat{f}} \max_{f^* \in \calHlp(R_p)} \EE \|\hat{f} - f^* \|_{\LPi}^2 \geq C n^{-\frac{1}{1+s}} M R^{\frac{2s}{1+s}},
\label{eq:MinimaxBoundInfty}
\end{align}
with a constant $C$. This gives the assertion for $p=\infty$.

Finally we show the assertion for $1\leq p < \infty$. Note that $\calHl{\infty}(R/M^{\frac{1}{p}}) \subset \calHlp(R)$ (this is because, 
for $\{x_m\}_{m=1}^M$ s.t. $|x_m|\leq R/M^{\frac{1}{p}}~(\forall m)$, we have $\sum_{m=1}^M |x_m|^p \leq M \left(R/M^{\frac{1}{p}}\right)^p = R^p$). Therefore we have 
\begin{align*}
\min_{\hat{f}} \max_{f^* \in \calHlp(R)} \EE \|\hat{f} - f^* \|_{\LPi}^2 & \geq \min_{\hat{f}} \max_{f^* \in \calHl{\infty}\left(R/M^{\frac{1}{p}}\right)} \EE \|\hat{f} - f^* \|_{\LPi}^2  \\
& \geq C n^{-\frac{1}{1+s}} M \left(R/M^{\frac{1}{p}} \right)^{\frac{2s}{1+s}}~~~~~~~~(\because \text{\Eqref{eq:MinimaxBoundInfty}})  \\
& \geq C n^{-\frac{1}{1+s}} M^{1-\frac{2s}{p(1+s)}} R^{\frac{2s}{1+s}}.
\end{align*}
This concludes the proof. 
\end{proof}

\begin{Lemma}
\label{lemm:QlogMbound}
There is a constant $C$ such that
$$
\log Q(\delta,\calHl{\infty}(R)) \geq C M \log Q(\delta/\sqrt{M},\repH(R)),
$$
for sufficiently small $\delta$.
\end{Lemma}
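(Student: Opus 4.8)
The plan is to build a large packing set of $\calHl{\infty}(R)$ as a product of packing sets of the common RKHS ball $\repH(R)$, one factor for each of the $M$ coordinates, and then to quantify the packing number of this product under the $\LPi$-metric. Concretely, let $\{g_1,\dots,g_Q\}\subset\repH(R)$ be a maximal $\delta/\sqrt{M}$-packing of $\repH(R)$ in $L_2(\Pi)$, so $Q = Q(\delta/\sqrt{M},\repH(R))$. For each $M$-tuple $\boldsymbol{j}=(j_1,\dots,j_M)\in\{1,\dots,Q\}^M$ form $f_{\boldsymbol{j}}(x) = \sum_{m=1}^M g_{j_m}(x^{(m)})$. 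Since $g_{j_m}\in\repH(R)$ implies $\|g_{j_m}\hnorm{m}\le R$, each coordinate function lies in the right ball and hence $f_{\boldsymbol{j}}\in\calHl{\infty}(R)$. Because the coordinates $x^{(1)},\dots,x^{(M)}$ are independent and each $g\in\repH$ has zero mean (the normalization assumed just before Assumption \ref{eq:specass2}), the RKHSs are orthogonal in $L_2(\Pi)$, so for two tuples $\boldsymbol{j}\neq\boldsymbol{j}'$ we get the Pythagorean identity $\|f_{\boldsymbol{j}}-f_{\boldsymbol{j}'}\|_{\LPi}^2 = \sum_{m=1}^M \|g_{j_m}-g_{j'_m}\|_{\LPi}^2$.

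The difficulty is that this identity does not by itself give a $\delta$-separation: if $\boldsymbol{j}$ and $\boldsymbol{j}'$ differ in only one coordinate, the sum has just one nonzero term of size $\ge(\delta/\sqrt{M})^2$, which is far smaller than $\delta^2$. The standard fix is a Varshamov--Gilbert / combinatorial-dilution argument: instead of all $Q^M$ tuples, restrict to a subset $\T\subset\{1,\dots,Q\}^M$ such that any two distinct elements of $\T$ differ in at least a constant fraction $\alpha M$ of their coordinates. For such a subset, $\|f_{\boldsymbol{j}}-f_{\boldsymbol{j}'}\|_{\LPi}^2 \ge \alpha M\cdot(\delta/\sqrt{M})^2 = \alpha\delta^2$, so $\{f_{\boldsymbol{j}}\}_{\boldsymbol{j}\in\T}$ is a $\sqrt{\alpha}\,\delta$-packing of $\calHl{\infty}(R)$ (and one absorbs the constant $\sqrt{\alpha}$ by rescaling $\delta$, which is why the conclusion is stated ``for sufficiently small $\delta$'' and with a generic constant $C$). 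A counting bound — e.g. a greedy/volume argument on the Hamming cube $\{1,\dots,Q\}^M$, or the Gilbert--Varshamov bound for $q$-ary codes — shows such a $\T$ exists with $\log|\T| \ge c\,M\log Q$ for an absolute constant $c$ depending only on $\alpha$ (provided $Q\ge 2$, which holds for small enough $\delta$ since $Q(\delta/\sqrt{M},\repH(R))\to\infty$ as $\delta\to0$ by Assumption \ref{eq:specass2}).

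Putting this together: $\log Q(\sqrt{\alpha}\,\delta,\calHl{\infty}(R)) \ge \log|\T| \ge c\,M\log Q(\delta/\sqrt{M},\repH(R))$. Replacing $\delta$ by $\delta/\sqrt{\alpha}$ and noting $\log Q(\delta/(\sqrt{\alpha}\sqrt{M}),\repH(R))\asymp\log Q(\delta/\sqrt{M},\repH(R))$ up to a constant (again using the polynomial covering-number growth $\log Q(\varepsilon,\repH(R))\sim(R/\varepsilon)^{2s}$ from Assumption \ref{eq:specass2}, which is stable under rescaling $\varepsilon$ by a constant), we obtain the claimed bound $\log Q(\delta,\calHl{\infty}(R)) \ge C M \log Q(\delta/\sqrt{M},\repH(R))$ for sufficiently small $\delta$, with $C$ depending only on $s$. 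The main obstacle is precisely the combinatorial dilution step — making sure the product packing is genuinely well-separated in the $\ell_\infty$-ball metric rather than only coordinatewise — and checking that the rescalings of $\delta$ incurred along the way cost only constant factors, which is exactly where the polynomial spectral decay (A6) is used.
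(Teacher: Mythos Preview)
Your proposal is correct and follows essentially the same route as the paper: both build product functions from a coordinatewise packing, use the $\LPi$-orthogonality of the $\calH_m$'s to reduce separation to Hamming distance, extract a Hamming-separated subset via a Gilbert--Varshamov/volume argument, and then absorb the resulting constant rescaling of $\delta$ using the polynomial growth $\log Q(\varepsilon,\repH(R))\sim(R/\varepsilon)^{2s}$ from (A6). The only cosmetic difference is that the paper fixes $\alpha=1/2$ and carries out the volume count explicitly (bounding the Hamming ball by $\binom{M}{M/2}N^{M/2}$), whereas you invoke the Gilbert--Varshamov bound by name.
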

\begin{proof}
The proof is analogous to that of Lemma 5 in \citep{arXiv:Raskutti+Martin:2010}. 
We describe the outline of the proof. 
Let $N = Q(\sqrt{2} \delta/\sqrt{M},\repH(R))$ and $\{f_m^1,\dots,f_m^N\}$ be a $\sqrt{2}\delta/\sqrt{M}$-packing of $\calH_m(R)$.
Then we can construct a function class $\Upsilon$ as 
$$
\Upsilon = \left\{f^{\boldsymbol{j}} = \sum_{m=1}^M f_m^{j_m} \mid \boldsymbol{j} = (j_1,\dots,j_M) \in \{1,\dots,N\}^M \right\}.
$$

We denote by $[N] := \{1,\dots,M\}$.
For two functions $f^{\boldsymbol{j}},f^{\boldsymbol{j}'} \in \Upsilon$, 
we have by the construction
$$
\|f^{\boldsymbol{j}} - f^{\boldsymbol{j}'}\|_{\LPi}^2 = \sum_{m=1}^M \|f_m^{j_m} - f_m^{j'_m}\|_{\LPi}^2 \geq \frac{2\delta^2}{M} \sum_{m=1}^M \boldsymbol{1}[j_m \neq j'_m].
$$
Thus, it suffices to construct a sufficiently large subset $A \subset [N]^M$ such that all different pairs $\boldsymbol{j},\boldsymbol{j}'\in A$ 
have at least $M/2$ of Hamming distance $d_H(\boldsymbol{j},\boldsymbol{j}') := \sum_{m=1}^M \boldsymbol{1}[j_m \neq j'_m]$.

Now we define $d_H(A,\boldsymbol{j}) := \min_{\boldsymbol{j}' \in A} d_H(\boldsymbol{j}',\boldsymbol{j})$.
If $|A|$ satisfies 
\begin{align}
\label{eq:basicAbound}
\left|\left\{\boldsymbol{j} \in [N]^M ~\Big|~ d_H(A,\boldsymbol{j})\leq \frac{M}{2} \right\}\right|
< |[N]^M|=N^M,
\end{align}
then there exists a member $\boldsymbol{j}' \in [N]^M$ such that $\boldsymbol{j}'$ is more than $\frac{M}{2}$ away from $A$ with respect to $d_H$, i.e. $d_H(A,\boldsymbol{j}')>\frac{M}{2}$.
That is, we can add $\boldsymbol{j}'$ to $A$ as long as \Eqref{eq:basicAbound} holds.
Now since 
\begin{align}
\left|\left\{\boldsymbol{j} \in [N]^M ~\Big|~ d_H(A,\boldsymbol{j})\leq \frac{M}{2} \right\}\right|
\leq |A|  {M \choose M/2}N^{M/2}, 
\end{align}
\Eqref{eq:basicAbound} holds as long as $A$ satisfies
$$
|A| \leq \frac{1}{2} \frac{N^M}{{M \choose M/2}N^{M/2}} =: Q^*.
$$
The logarithm of $Q^*$ can be evaluated as follows 
\begin{align*}
\log Q^* & = \log\left(\frac{1}{2} \frac{N^M}{{M \choose M/2}N^{M/2}}\right)  = M\log N - \log 2 - \log{M \choose M/2} - \frac{M}{2} \log N \\
&\geq  \frac{M}{2} \log N - \log 2 - \log 2^M \geq \frac{M}{2} \log \frac{N}{16}.
\end{align*}
There exists a constant $C$ 
such that $N = Q(\sqrt{2} \delta/\sqrt{M},\repH(R)) \geq C Q(\delta/\sqrt{M},\repH(R))$ because  $Q(\delta,\repH(R)) \sim \left(\frac{\delta}{R} \right)^{-2s}$.
Thus we obtain the assertion for sufficiently large $N$.
\end{proof}

{ 
\bibliography{main} 
}

\end{document}